\newif\ifmam
\numberwithin{equation}{section}
\newtheorem*{thm*}{Theorem}
\newtheorem{prop}[equation]{Proposition}
\theoremstyle{definition}
\newtheorem{defn}[equation]{Definition}
\newtheorem*{alg}{Algorithm}
\theoremstyle{remark}
\newcommand{\bR}{\mathbb{R}}
\newcommand{\aD}{\mathcal{D}}
\newcommand{\aE}{\mathcal{E}}
\newcommand{\Aff}{\mathcal{A}f\!f(d)}
\newcommand{\Id}{\mathrm{Id}}
\title{Resampling and averaging coordinates on data}
\author[A. J. Blumberg]{Andrew J. Blumberg}
\address{Irving Institute for Cancer Dynamics \\ Departments of Mathematics and Computer Science \\ Columbia University, NY}
\email{andrew.blumberg@columbia.edu}
\thanks{The first author was partially supported by the NSF grant
  DMS-1912194 and by ONR grant N00014-22-1-2679.}
\author[M. Carri\`ere]{Mathieu Carri\`ere}
\address{DataShape, Centre Inria d'Universit\'e d'Azur \\ Biot, France}
\email{mathieu.carriere@inria.fr}
\author[J. H. Fung]{Jun Hou Fung}
\address{Department of Systems Biology \\ Columbia University, NY}
\email{jf3380@cumc.columbia.edu}
\thanks{The third author was supported by the NSF under grant DMS-1912194.}
\author[M. A. Mandell]{Michael A. Mandell}
\address{Department of Mathematics \\ Indiana University, IN}
\email{mmandell@iu.edu}
\thanks{The fourth author was supported by the ONR grant N00014-22-1-2675}
\begin{document}

\begin{abstract}
We introduce algorithms for robustly computing intrinsic coordinates
on point clouds.  Our approach relies on generating many candidate
coordinates by subsampling the data and varying hyperparameters of the
embedding algorithm (e.g., manifold learning).  We then identify a
subset of representative embeddings by clustering the collection of
candidate coordinates and using shape descriptors from topological
data analysis.  The final output is the embedding obtained as an
average of the representative embeddings using generalized Procrustes
analysis.  We validate our algorithm on both synthetic data and
experimental measurements from genomics, demonstrating robustness to
noise and outliers.
\end{abstract}

\maketitle


\section{Introduction}

A central postulate of modern data analysis is that the
high-dimensional data we measure in fact arises as samples from a
low-dimensional geometric object.  This hypothesis is the basic
justification for dimensionality reduction which is the first step in
almost all geometric data analysis algorithms, particularly in
clustering and its higher dimensional analogs.  This is aimed at
mitigating the ``curse of dimensionality'', a broad term for various
concentration of measure results that imply that the geometry of
high-dimensional Euclidean spaces behaves very counter-intuitively.
The curse of dimensionality tells us that if this hypothesis is not
satisfied, we cannot expect to perform any meaningful inference at
all.

The recent explosion of data acquisition processes in many different
scientific fields (e.g., single-cell RNA sequencing experiments in
computational biology, realistic synthetic data sets obtained from
deep generative models, multivariate time series generated from large
numbers of sensors, etc.) has led to a dramatic increase of publicly
available data sets in high ambient dimensions. The need for tractable
and accurate data science tools for processing these data sets has
thus become critical. While supervised machine learning and deep
learning methods have proven to be very efficient in many different
areas and applications of data science, they are often limited by the
difficulty of finding suitable training data.  Hence, the problem of
studying this data via dimensionality reduction and exploratory data
analysis has become central.

However, dimensionality reduction algorithms generally have the
inherent complication that they depend on a variety of ad hoc
hyperparameters and are sensitive both to noise concentrated around
the underlying geometric object and to outliers.  Moreover, all methods are known to be very
sensitive to these hyperparameters: a slight change in only one
parameter can lead to dramatic changes in the output lower-dimensional
embeddings.  As a consequence, it is challenging to perform meaningful
inference based on such procedures and most serious applications
involve a lot of ad hoc cross-validation procedures.

This article proposes a method to resolve this issue by producing {\em
robust} embeddings, employing the following process: 
\begin{itemize}
\item Subsampling and varying hyperparameters to produce a number of
embeddings of a given (low) dimension using dimensionality
reduction techniques,
\item using affine isometries and the solution to the general
``Procrustes problem'' to align the embeddings in a way that minimizes
a certain natural metric (explained below),
\item clustering the aligned embeddings based on that metric,
\item identifying a cluster of ``representative'' embeddings by
  looking at the cluster density and using topological data analysis
  (TDA) to eliminate clusters with topologically complex embeddings, and
\item taking the average (centroid) of the embeddings in the
  representative cluster to produce a final low dimensional embedding
  of most of the points.  (Points which are not present in the
  averaged embeddings are returned as ``outliers''.)
\end{itemize}
The subsampling leads to results that are robust with respect to
isotropic noise and have reduced sensitivity to outlier data points.  The
clustering discards outlier embeddings with a high level of
distortion.  Essentially any dimensionality reduction algorithm can be
used, but the procedure works best when the algorithm preserves some
global structure; e.g., t-SNE and UMAP, which preserve local
relationships but can radically distort global relationships produce
less sharp results.  (See Section~\ref{sec:real} for further discussion of
this point.)  The intuition behind our use of TDA invariants (notably
persistent homology) to detect the representative embeddings is that
we expect coordinate charts to be contractible subsets: 
theoretical guarantees for manifold learning imply that the algorithms
really only work in this case.  Persistent
homology is a convenient way to detect complicated topological
features such as holes.

We illustrate the process in Figure~\ref{fig:intro}. The
figure shows a standard 3-dimensional ``Swiss roll'' synthetic data
set and uses Isomap to produce a
$2$-dimensional coordinate chart.  This is a particularly easy
example, but we achieve similar results in the presence even of a
large number of outliers and across parameter regimes.  See
Section~\ref{sec:synthetic} for a variety of synthetic examples.

    \begin{figure}
        \centering
        \begin{subfigure}[t]{0.45\textwidth}
            \centering
            \includegraphics[height=1.6in]{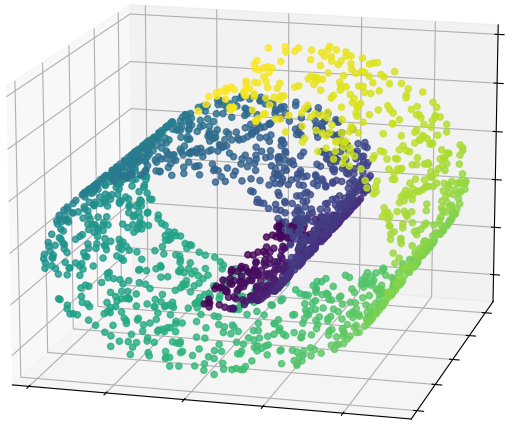}
            \caption{Original data in 3D}
        \end{subfigure}%
        \begin{subfigure}[t]{0.45\textwidth}
            \centering
            \includegraphics[height=1.6in]{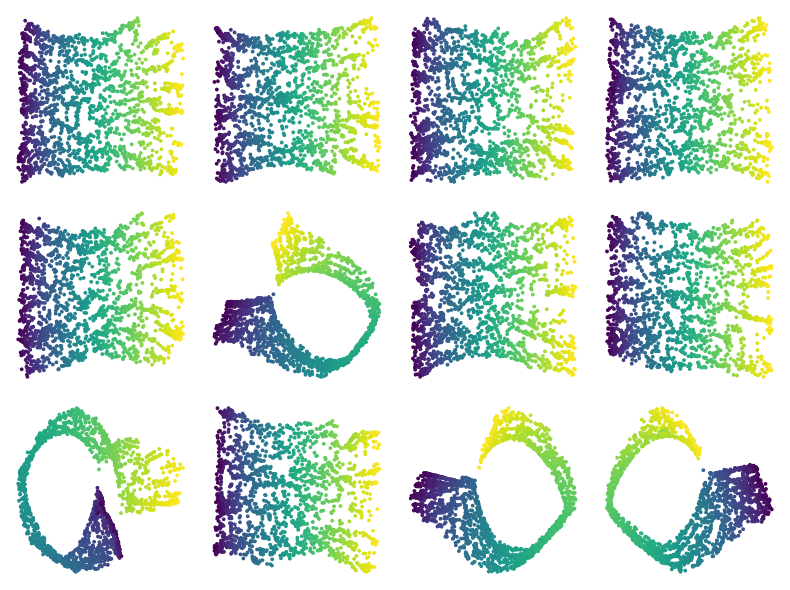}
            \caption{Isomap results on subsamples}
        \end{subfigure}
        \\
        \begin{subfigure}[t]{0.45\textwidth}
            \centering
            \includegraphics[height=1.6in]{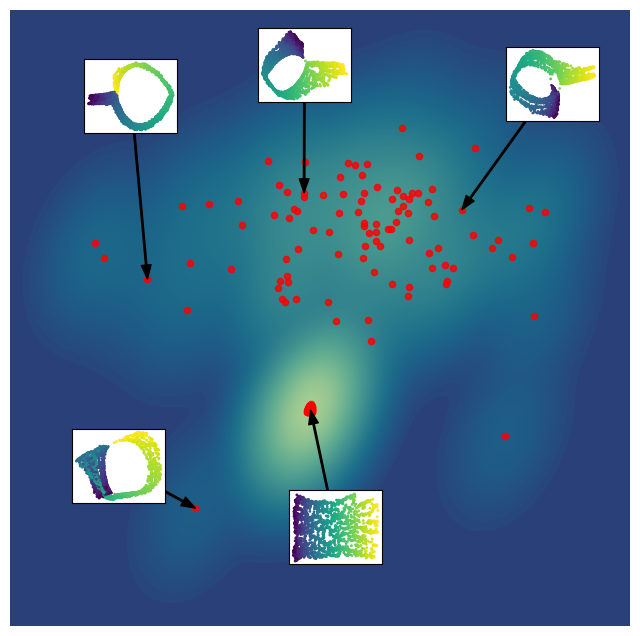}
            \caption{Clustering embeddings}
        \end{subfigure}
        \begin{subfigure}[t]{0.45\textwidth}
            \centering
            \includegraphics[height=1.6in]{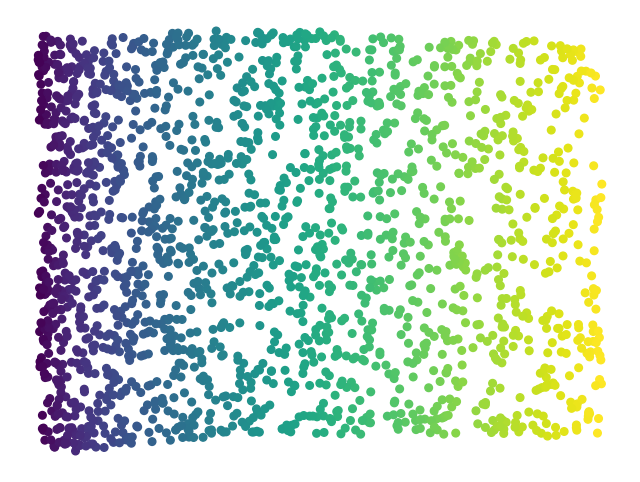}
            \caption{Final aligned output}
        \end{subfigure}
        \caption{Swiss roll dataset}
        \label{fig:intro}
    \end{figure}

The discussion above emphasizes the robustness in the presence of
outliers, but our technique has an additional use case where it can be
employed to reduce the computational complexity of dimensionality reduction
computations: instead of computing a dimensionality reduction of the entire
data set, the procedure above can be used to produce a global set of
coordinates by averaging coordinates from much smaller subsamples.

In order to demonstrate the use of our procedure on real data, we
apply it to multiple dimensionality reduction methods for producing coordinates on genomic data sets in Section~\ref{sec:real},
specifically blood cells (PBMC) and mouse neural tissue.  The results
show that our procedure can be used to produce robust coordinates from
some manifold learning algorithms and indicates the instability of
coordinates produced from others.

\subsection*{Related work}

Procrustes analysis has long been used to study
shapes~\cite{Kendall1984}.  Building on scaling methods, the
orthogonal Procrustes problem was first studied and solved by
Green~\cite{Green1952}, and Sch\"{o}nemann and
Sch\"{o}nemann-Carroll~\cite{Schonemann1966,Schonemann1970}, and later
this was generalized to three or more shapes by
Gower~\cite{Gower1975}.  Nowadays, there are many variants of
Procrustes problems.  Beyond statistical shape
analysis~\cite{Goodall1991}, Procrustes analysis has found
applications in sensory science~\cite{Dijksterhuis1995a}, market
research~\cite{Gower1994}, image analysis~\cite{Glasbey1998},
morphometrics~\cite{Bookstein1992}, protein structure
\cite{McLachlan1972}, environmental science~\cite{Richman1993},
population genetics~\cite{Wang2010},
phylogenetics~\cite{Balbuena2013}, chemistry~\cite{Andrade2004},
ecology~\cite{Lisboa2014}, and more.  In forthcoming work, the authors
will present an application to neuroscience. 

The early applications of Procrustes analysis were in the field of
psychometrics, where point sets are registered to a single ``reference''
profile that is sometimes realized in physical space, and datasets are
small.  More recently, it has been increasingly recognized that the same
techniques and ideas can be applied just as well to the shape of
``data'' itself with no expectation of a reference whatsoever, and to
high-dimensional datasets~\cite{Andreella2022} with many samples.
For example,~\cite{Ma2024} uses a version of Procrustes analysis with
shuffling to align low-rank representations of single cell
transcriptomic data.  Procrustes analysis can also be applied to
statistics itself.  In this sense, part of our current study can be
viewed as an expansion of~\cite{deLeeuw1986}, which studied a
jackknife procedure on multidimensional scaling, a limited form of
manifold learning.  Similarly,~\cite{Wang2008} applied Procrustes
analysis to the outputs of dimensionality reduction, specifically
Laplacian eigenmaps and locality preserving projections, but only for
two point clouds at a time. 

\subsection*{Summary} In Section~\ref{sec:background}, we provide a
concise review of the background for our work; we discuss manifold
learning and dimensionality reduction algorithms, the Procrustes
distance and alignment problem, and topological data analysis.  Next, in
Section~\ref{sec:algo}, we describe our algorithm.  We then begin a
theoretical analysis of the behavior of our algorithm by reviewing in
detail the solution to the generalized Procrustes problem in
Section~\ref{sec:procrustes}.  We employ a solver based on alternating
least squares minimization, and we describe the stability and
convergence behavior of this process in Section~\ref{sec:ALSanal}.  We
then use this to analyze the stability of our algorithm in
Section~\ref{sec:stability}.  The paper concludes with two sections
exploring the behavior of our algorithm: Section~\ref{sec:synthetic}
studies how it works on simulated data with various kinds of noise and
outliers, and Section~\ref{sec:real} applies our algorithm to
single-cell RNAseq data.

\subsection*{Code availability}

For aligning two point clouds, we used the command
\texttt{scipy.spatial.procrustes} from the SciPy library
\cite{SciPy2020}.  For general Procrustes alignment of two or
more point clouds, we developed our own software implementation based
on alternating minimization, which can be found at
\url{github.com/jhfung/Procrustes}.

\subsection*{Acknowledgments}

We thank Abby Hickok and Raul Rabadan for useful comments.

\section{Background about manifold learning, topological data
  analysis, and Procrustes distance}
\label{sec:background}

Our algorithm uses as building blocks three fundamental concepts from
geometric data analysis.  The first building block is dimensionality
reduction. Our algorithm takes as a black box a choice of dimensionality
reduction algorithm.  We use manifold learning for this purpose, and
to be concrete, we will focus on Isomap, but many other manifold
learning (or more general dimensionality reduction algorithms) could
be used instead.  Manifold learning seeks to find intrinsic
coordinates on data that reflects the shape of the data.  The second
building block is persistent homology, which is the main invariant of topological
data analysis (TDA).  Persistent homology is a multiscale shape
descriptor that provides qualitative shape information; we will use it
to detect when coordinate charts are spread out and have contractible components.  The
third building block is the Procrustes distance, which is a matching distance for
(partially defined) vector-valued functions on a given finite set.  Computing the
Procrustes distance involves computing an optimal alignment of the
vectors; our algorithm uses both the distance as a metric and also the
computation of the distance as an alignment algorithm.

The manifold learning algorithm takes a point cloud $X\subset \bR^{N}$
and produces an embedding $X\to \bR^{d}$ for $d \ll N$ meant to retain
the intrinsic relationship between the points of $X$.  Specifically,
for Isomap, the procedure is as follows.  Given a point
cloud $X \subset \bR^N$ and a scale parameter $\epsilon > 0$ (or a
nearest neighbor limit $\ell$), we form
the weighted graph $G$ with vertex set labelled by the points $X$ and
an edge of weight $\lVert x - x'\rVert$ when $\lVert x - x'\rVert < \epsilon$ (or when
$x'$ is one of the $\ell$ closest points to $x$).  The
graph metric on $G$ now provides a new metric on the point cloud $X$.
Finally, we use MDS (multidimensional scaling) to embed this new
metric space into $\bR^d$, for $d \ll N$. The intuition is that when
the data $X$ comes from a low dimensional manifold embedded in a high
dimensional space, short paths in the ambient space accurately reflect
an intrinsic notion of nearness in the manifold.  When the points $X$
have been uniformly sampled from a convex subset of a manifold such
that $\epsilon$ is sufficiently small relative to the curvature of the
manifold and the injectivity radius of the embedding (which can be
described in terms of the reach or condition number of the manifold),
the coordinates produced by this procedure do approximate the intrinsic
Riemannian metric on $X$.

Topological data
analysis uses invariants from algebraic topology that encode
qualitative shape information to give a picture of the intrinsic
geometry of a point cloud.  The $q$th homology of a topological space $T$ is a
vector space which encodes the $q$-dimensional holes in $T$; for
example, when $q=0$ this is measuring the number of path-connected
components of the space and when $q=1$ it is counting the number of
circles that are not filled in.  Topological data analysis assigns to
a point cloud a family of associated simplicial complexes, usually
filtered by a varying feature scale parameter.  The resulting
algebraic invariant, persistent homology, captures the scales at which
homological features appear and disappear.  In the case when $q=0$,
the persistent homology is basically the single-linkage hierarchical
clustering dendrogram.  When $q=1$, persistent homology measures the
number of loops in the underlying graph of the data that cannot be
filled in at each scale.

Since manifold learning only really makes sense for relatively evenly
sampled points from subsets whose components are contractible (or in the case of
Isomap, the stronger condition of being convex), we can use
topological data analysis to measure how well the resulting chart
appears to satisfy this condition.  Specifically, a good chart will
have a large cluster for each component in the $PH_0$ dendrogram and will have
only very small (noise) loops in $PH_1$.

The Procrustes distance we use comes from the Procrustes
problem.  In basic form, given two $d\times n$ matrices $X$ and $Y$,
thought of as two functions from $\{1,\dotsc,n\}$ to $\bR^{d}$, the
Procrustes distance is the minimum of the $\ell^{2}$ distances between the
functions after applying an affine isometry $g$ to one of them:
\[
\aD(X,Y)=\min_{g} d(g\cdot X,Y)=\min_{g} \left({\sum_{j=1}^{n}d(g\cdot X(j),Y(j))^{2}}\right)\raisebox{1.25ex}{\mbox{$^{1/2}$}}.
\]
More generally, we consider the case when $X$ and $Y$ are partially
defined functions on $\{1,\dotsc,n\}$.  If we let $I_{X}$ and $I_{Y}$
denote the subsets of $\{1,\dotsc,n\}$ where $X$ and $Y$
(respectively) are defined, then the Procrustes distance is
\[
\aD(X,Y)=\min_{g} \left({\sum_{j\in
I_{X}\cap I_{Y}}d(g\cdot X(j),Y(j))^{2}}\right)\raisebox{1.25ex}{\mbox{$^{1/2}$}}.
\]
The algorithm for calculating $\aD$ finds an affine isometry
$g$ minimizing the distance; we review it in
Section~\ref{sec:procrustes}.  We note that in the partially defined
case, the resulting Procrustes distance $\aD$ is not a metric; however, for
functions with substantial overlap in domain, it does provide a 
measure of dissimilarity on their overlap.

Our algorithm uses a generalization of the Procrustes
distance algorithm to search for optimal rotations to align 
different embeddings.  The algorithm for two matrices always
finds an optimal rotation (and therefore accurately calculates the
Procrustes distance) and in principle gives an exact solution.  The
algorithm to align more than two matrices is an iterative optimization
procedure, which always converges, and appears to generically converge
to the optimal solution, but is not guaranteed to converge to it. 

\section{The algorithm}\label{sec:algo}

This section defines the basic algorithm we propose in this paper.  It
admits several choices of blackbox subroutines and tolerance parameters.
The first major blackbox subroutine is a procedure \textbf{DimRed}
for dimensionality reduction: for $X \subset \bR^N$ produces an
embedding $X \to \bR^d$ (where $d \ll N$).  We let $\Theta$ denote the
collection of parameters controlling the behavior of the
dimensionality reduction procedure.  For sake of discussion, we take
the procedure to be Isomap, which has parameter the neighborhood size
$\epsilon$.

A second major blackbox subroutine is a sampling procedure \textbf{Samp} that generates
random subsamples $Y \subseteq X$ of a given size $n$.  Typically, we
draw $Y_i$ uniformly and independently without replacement from $X$.

A third major blackbox subroutine is a procedure $\textbf{Param}$ that chooses the
parameter $\Theta$ for \textbf{DimRed}.  In the case of Isomap,
typically we take the parameter over a mesh of reasonable values.

The final major blackbox subroutine is a clustering algorithm
\textbf{Clust} for
finite sets with dissimilarity measures.

We describe the various minor parameters controlling the number of
iterations, tolerances for certain optimization procedures, number of
subsamples, etc., as they occur below.

Given the parameter choices, the input to the algorithm is a point cloud
\[
X=\{X(j)\mid j=1,\dotsc,n\}\subset \bR^{N}.
\]

\subsection*{Step 1}
We use 
$\textbf{Samp}$ to generate many subsamples $\{Y_a\}$ and apply
\textbf{DimRed} with parameter settings from \textbf{Param} to produce
embeddings $\phi_{a,b}\colon Y_a \to \bR^d$.  Let $X_{a,b}$ be the
subset of $\bR^{d}$ specified by the image, where the elements of each
subsample configuration $X_{a,b}$ inherit the indexing of
original data set $X$.  That is, each $X_{a,b}$ is indexed by the subset
of $\{1,\dotsc,n\}$ corresponding to the points of $X$ represented
in the subsample $Y_{a}$. 

\subsection*{Step 2}
We calculate the pairwise distances between the images $X_{a,b}$
using the Procrustes distance to define a dissimilarity measure on the
set of subsample configurations $\{X_{a,b}\}$ a pseudo-metric space.

\subsection*{Step 3}
We use \textbf{Clust} to form clusters.  

\subsection*{Step 4}
Choose a distinguished cluster (the ``good cluster'') from
among these as follows.  
\begin{itemize}
\item 
We discard clusters whose median inter-point
distance is above a certain tolerance (a minor parameter of the
algorithm).  We refer to the remaining clusters as ``dense clusters''.\\

\item
For each
dense cluster we select random elements in the cluster (the number
or percentage size of the selection a minor parameter of the
algorithm) on which to calculate the persistent homology 
$PH_{1}$ and essential dimensionality (number of singular values above
given tolerance).\\

\item
We discard dense clusters where a selected element has
essential dimensionality less than $d$, and then we choose the good
cluster to be the one that minimizes the maximum length of bars in
$PH_{1}$.\\
\item
If there are no remaining clusters or the maximum length of
bars in $PH_{1}$ is above a certain tolerance (a minor parameter of
the algorithm), we terminate with an error.
\end{itemize}

\subsection*{Step 5}
We discard all the subsets $X_{a,b}$ not in the good
cluster, and singularly reindex the subsets in the good cluster
$X_{i}=X_{a_{i},b_{i}}$ for $i=1,\dotsc,k$ (with $k$ the number of
$X_{a,b}$ in the good cluster).

\subsection*{Step 6}
We align the embeddings $X_{i}$ by applying an affine
isometry $X_{i}(j)\mapsto Q_{i}X_{i}(j)+v_{i}$ to each point
$X_{i}(j)$ in $X_{i}$ (for the indices $j\in \{1,\dotsc,n\}$ that
occur in $X_{i}$) where $Q_{i}$ is a $d\times d$ orthogonal matrix and
$v_{i}$ is a vector in $\bR^{d}$, chosen by the {\em alternating least
squares method} described in Section~\ref{sec:procrustes}.

\subsection*{Output}
The final output is an appropriate average of the aligned embeddings as
follows. For each point $X(j)$ in $X$ ($j=1,\dotsc,n$):
\begin{itemize}
\item If no $X_{i}$ includes a $j$th point, then the $j$th point is omitted
from the final embedding; otherwise,
\item The final embedding has $j$th point the average in $\bR^{d}$ of
the $j$th points of the $X_{i}$ which have $j$th points,
\[
\bar X(j)=\tfrac1s(X_{i_{1}}(j)+\dotsb+X_{i_{s}}(j)).
\]
\item We also output a list of ``outliers'' that consists of the omitted
indices, the indices that did not have points in any of the embeddings
$X_{i}$ chosen to produce the final average.
\end{itemize}

\section{A review of the Procrustes problem}\label{sec:procrustes}

Our main ingredient for averaging coordinates is posing the averaging
in terms of the so-called {\em Procrustes problem}, an alignment
problem that has been studied extensively in the psychometrics 
literature.  The material in this section distills the discussion
in~\cite{TenBerge1977, Commandeur-Matching,
TenBergeKiersCommandeur}. 

\subsection*{The standard orthogonal and affine orthogonal Procrustes
problem}

In its most standard form, given two matrices $X,Y$ of the same shape,
this problem is a matrix approximation problem that aims at finding
the best orthogonal matrix that matches $X$ to $Y$: 
\begin{equation}\label{standard-procrustes}
    Q_{\min} = \underset{Q\ {\rm s.t.}\ Q^T  Q = 1}{{\rm argmin}} \|Q X - Y\|_F,
\end{equation}
where $\|\cdot\|_F$ denotes the Frobenius norm (the square root of the
sum of the squares of the entries). By interpreting
$Q_{\min}$ as an isometry (in Euclidean space), and $X,Y$ as point
clouds (each column representing a point), the standard Procrustes
problem can thus be seen as an alignment problem which aims at finding
the best orthogonal transformation that transports the point cloud $X$
closest to the point cloud $Y$ in terms of the sum of pointwise
distances. In practice, it can reformulated as finding the best
orthogonal matrix $Q_{\min}$ approximating the given matrix $YX^{T}$,
and proved to be efficiently solved by computing the SVD of 
\[
YX^T=U\Sigma V^{T}
\]
(with $U$ and $V$ orthogonal and $\Sigma$ non-negative diagonal)
and taking $Q_{\min}=UV^T$.

If we allow the more general affine isometries (that allow a
translation component), it is easy to check that 
\begin{equation}\label{eq:affineprocrustes}
    \Omega_{\min} = \underset{\Omega\in \Aff}{{\rm argmin}} \|\Omega X - Y\|_F,
\end{equation}
is given by the affine isometry
\[
\Omega_{\min}x=Q(x-a)+b=Qx+(b-Qa)
\]
where $a$ is the centroid (mean) of $X$, $b$ is the
centroid of $Y$, and $Q$ is the orthogonal matrix $Q_{\min}$ that
solves equation~\eqref{standard-procrustes} for the matrices
$X-\mathbf{1}a$ and $Y-\mathbf{1}b$ (where $\mathbf{1}$ is the
$1\times n$ matrix of $1$s: the matrices $X-\mathbf{1}a$ and
$Y-\mathbf{1}b$ are the translations of $X$ and $Y$ to be centered on
the origin.)

In the notation of Section~\ref{sec:background}, the Procrustes
distance $\aD(X,Y)$ from $X$ to $Y$ is then 
\[
\aD(X,Y)=\lVert\Omega_{\min}X-Y\rVert_{F}.
\]

As discussed in Section~\ref{sec:background}, we need to consider the
more general case when the matrices are missing columns; we refer to
this as the \emph{missing points} case.  Viewing $d\times n$ matrices
as functions $\{1,\dotsc,n\}$ to $\bR^{d}$, the missing points case is
when we have partially defined
functions $X$ and $Y$ from $\{1,\dotsc,n\}$ to $\bR^{d}$; that is $X$
and $Y$ are functions from subsets $I_{X}$ and $I_{Y}$ of 
$\{1,\dotsc,n\}$ to $\bR^{d}$.  Let $I$ be the natural reindexing of
the intersection of the domains $I_{X}\cap I_{Y}$.  Then the
Procrustes distance $\aD(X,Y)$ as defined in
Section~\ref{sec:background} is calculated by the formula
\[
\aD(X,Y)=\lVert\Omega_{\min}X_{I}-Y_{I}\rVert_{F}
\]
where $\Omega_{\min}$ solves~\eqref{eq:affineprocrustes} for the
matrices $X_{I}$, $Y_{I}$ (the matrices obtained by restricting $X$
and $Y$ to $I_{X}\cap I_{Y}$).

\subsection*{Background on the generalized Procrustes problem}

The formulation of the Procrustes problem in the previous subsection
involved only 2 matrices or matrices missing points (partially defined
functions).  In this subsection, we begin the discussion of the problem
for multiple matrices.  We then generalize to the missing points case
and discuss algorithms in the following subsections.

We begin with some notation.  Let $X_1, X_2, \ldots, X_k$ be a
collection of $k$ point clouds, each containing $n$ \emph{ordered}
points in $\mathbb{R}^d$.  We can represent each $X_i$ as a $d \times
n$ matrix, and denote the collection of such by $\mathbf{X} = (X_i)$.
The allowable transformations will be drawn from a group $G$, which we
generally take to be the group of linear isometries (orthogonal transformations) or affine
isometries of $\bR^{d}$. We write 
$g_i \in G$ for the transformation that will be applied to the $i$th
point cloud $X_i$, and $\mathbf{g} = (g_i)$ for the collection of
them.

\begin{defn}
The \emph{generalized Procrustes problem} is the following: given $\mathbf{X}$, a
set of $k$ input configurations of $n$ points in $\bR^{d}$ and group
$G$ acting continuously on $\bR^{d}$, determine the optimal transformations $\mathbf{g}$ and configuration $Z$ that minimizes the loss functional 
    \begin{equation*}
        \mathcal{E}(\mathbf{X}, \mathbf{g}, Z) = \frac{1}{k}
	\sum_{i=1}^k \lVert g_i \cdot X_i- Z\rVert^2_{F}
=\frac1k\sum_{i=1}^{k}\sum_{j=1}^{n}\lVert g_{i}\cdot X_{i}(j)-Z(j)\rVert^{2}
    \end{equation*}
\end{defn}

The existence of solutions to Procrustes problems in the groups we are
interested in follows from simple considerations.  

\begin{prop}
    If $G$ is compact or the semi-direct product of a compact group
    and the translation group, then $\mathcal{E}$ achieves a global minimum.
\end{prop}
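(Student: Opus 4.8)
The plan is to reduce the minimization to a compact set and invoke continuity of $\mathcal{E}$ together with the extreme value theorem. First I would observe that in the loss
\[
\mathcal{E}(\mathbf{X},\mathbf{g},Z)=\frac1k\sum_{i=1}^{k}\lVert g_i\cdot X_i - Z\rVert_F^2,
\]
for fixed $\mathbf{g}$ the function is a strictly convex quadratic in the entries of $Z$, minimized uniquely at the centroid $Z=\bar{Z}(\mathbf{g})=\frac1k\sum_i g_i\cdot X_i$. Substituting this back, the problem becomes minimizing $\mathbf{g}\mapsto \mathcal{E}(\mathbf{X},\mathbf{g},\bar Z(\mathbf{g}))$ over $\mathbf{g}\in G^k$, and $Z$ is then recovered as the centroid; so it suffices to handle the minimization over $\mathbf{g}$ alone.

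Next I would treat the two cases. If $G$ is compact, then $G^k$ is compact, $\mathcal{E}(\mathbf{X},\cdot,\bar Z(\cdot))$ is continuous (the action is continuous and all the arithmetic is polynomial/continuous), so it attains a minimum. If instead $G=K\ltimes \bR^d$ with $K$ compact, write $g_i=(Q_i,v_i)$ acting by $g_i\cdot x = Q_i x + v_i$. The key normalization is translation invariance: replacing each $v_i$ by $v_i + w$ for a common $w\in\bR^d$ shifts every $g_i\cdot X_i$ and the centroid $\bar Z$ by $w$, leaving $\mathcal{E}(\mathbf{X},\mathbf{g},\bar Z(\mathbf{g}))$ unchanged. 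Hence we may assume the configurations are normalized, say $v_1=0$, or equivalently that $\bar Z(\mathbf{g})$ is centered at the origin. Under this normalization the $v_i$ are constrained: from $\lVert Q_i X_i + v_i - Z\rVert_F^2 \ge \bigl(\sqrt n\,\lVert v_i\rVert - \lVert Q_i X_i\rVert_F - \lVert Z\rVert_F\bigr)_+^2$ one sees that if some $\lVert v_i\rVert\to\infty$ then $\mathcal{E}\to\infty$ (using that $\lVert Q_i X_i\rVert_F=\lVert X_i\rVert_F$ is bounded since $K$ is compact, and $\lVert Z\rVert_F=\lVert\bar Z(\mathbf{g})\rVert_F$ is controlled by the $v_i$ in a sublinear way). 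Therefore the infimum is unchanged if we restrict to $\{\mathbf{g}: v_1=0,\ \lVert v_i\rVert\le R\}$ for a suitable finite $R$ (e.g. $R$ large enough that this set contains a point with $\mathcal{E}$ below the value at $\mathbf{g}=\mathrm{id}$, by coercivity). This restricted set is a closed bounded subset of $(K\times\bR^d)^k$, hence compact, and continuity again gives an attained minimum; unwinding the normalization produces an actual minimizer $(\mathbf{g},Z)$ of the original functional.

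The main obstacle is the coercivity estimate in the translation direction: one has to argue carefully that $\mathcal{E}$ cannot stay bounded while the $v_i$ (modulo the common shift) escape to infinity. The cleanest way is to fix the gauge $v_1=0$ and then bound $\lVert\bar Z(\mathbf{g})\rVert_F\le \frac1k\sum_i(\lVert X_i\rVert_F+\sqrt n\,\lVert v_i\rVert)$, so that for the index $i^\*$ maximizing $\lVert v_{i^\*}\rVert$ the term $\lVert Q_{i^\*}X_{i^\*}+v_{i^\*}-\bar Z\rVert_F^2$ grows quadratically in $\lVert v_{i^\*}\rVert$ once $\lVert v_{i^\*}\rVert$ is large; everything else ($K$ compact, $n,k$ fixed, the $X_i$ fixed) is uniformly bounded, so $\mathcal{E}\to\infty$. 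With that in hand the reduction to a compact domain is routine and the extreme value theorem finishes the proof. The missing-points case is identical: the sums run over $j\in I_{X_i}$ instead of all $j$, and none of the estimates change.
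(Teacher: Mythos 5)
Your proof is correct, and it fills in exactly the ``simple considerations'' the paper alludes to without spelling out: eliminate $Z$ by replacing it with the centroid, quotient out the common translation by fixing a gauge (e.g.\ $v_1=0$), establish coercivity in the remaining translation variables (the key point being that the centroid bound $\lVert\bar Z\rVert_F\le \tfrac1k\sum_i(\lVert X_i\rVert_F+\sqrt n\,\lVert v_i\rVert)$ leaves a positive gap of $\tfrac{\sqrt n}{k}\lVert v_{i^*}\rVert$ for the largest translation, so the corresponding term grows quadratically), and then apply continuity and compactness. The paper offers no explicit argument, so there is nothing it does differently; your write-up is the intended routine proof, and it carries over verbatim to the missing-points case as you note.
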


The solution is never unique when $G$ contains non-identity isometries
of $\bR^{d}$ because for any $\mathbf{X}, \mathbf{g}, Z$, and
any isometry $h\in G$, writing $h\mathbf{g}$ for $(hg_{i})$, we
have
\[
\mathcal{E}(\mathbf{X}, \mathbf{g}, Z)=
\mathcal{E}(\mathbf{X}, h\mathbf{g}, h\cdot Z).
\]
In the case when $G$ a subgroup of the isometries of $\bR^{d}$, we can
eliminate this trivial duplication of solutions by considering only
solutions that have $g_{1}$ a fixed element of $G$, possibly depending
on $\mathbf{X}$.  An obvious
choice is to take $g_{1}$ to be the identity, but the choice we take
below is to choose $g_{1}$ to be the translation that centers $X_{1}$
on $0\in \bR^{d}$.  The \emph{first fixed formulation} of the
generalized Procrustes problem (for $G$ the group of affine
isometries) is to find $\mathbf{g}, Z$ minimizing the loss
functional and satisfying the further condition that $g_{1}$ is the
translation that centers $X_{1}$ on $0\in \bR^{d}$.

We now specialize to the case where $G$ is the group of affine
isometries $\Aff$ (consisting of composites of translations, rotations, and reflections in
$\mathbb{R}^d$).  We specify an element $g$ of $G$ by a $d\times d$
orthogonal matrix $Q$ and a vector $v$, where for $x\in \bR^{d}$,
\[
g\cdot x=Qx+v.
\]
On a configuration $X$, viewed as a $d\times n$ matrix, the action is 
given by matrix multiplication and addition
\[
g\cdot X=QX+\mathbf{1}v
\]
where $\mathbf{1}$ denotes the $1\times n$ matrix of $1$s.

A key observation is that we can eliminate translations and $Z$
as variables in the generalized Procrustes problem: 

\begin{prop}\label{prop:elimvar}
For fixed $\mathbf{X}$, the minimum value of
$\aE(\mathbf{X},\mathbf{g},Z)$ occurs at elements
$\mathbf{g}$, $Z$ where the
centroid of each $g_{i}X_{i}$ is the origin and each $Z(j)$ is the average of
$X_{i}(j)$,
\[
Z(j)=\frac1k\sum_{i=1}^{k}g_{i}\cdot X_{i}(j). 
\]
\end{prop}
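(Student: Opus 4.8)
\emph{Proof proposal.} The plan is to eliminate the auxiliary variables in two stages: first the reference configuration $Z$, then the translational parts of the isometries $g_{i}$, using the orthogonal splitting of each point cloud into its centroid and its centered part.

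First I would fix $\mathbf{g}$ and minimize over $Z$ alone. The loss $\aE(\mathbf{X},\mathbf{g},Z)=\frac1k\sum_{j=1}^{n}\big(\sum_{i=1}^{k}\lVert g_{i}\cdot X_{i}(j)-Z(j)\rVert^{2}\big)$ decouples completely over the index $j$, and for fixed points $p_{1},\dots,p_{k}\in\bR^{d}$ the map $z\mapsto\sum_{i}\lVert p_{i}-z\rVert^{2}$ is a strictly convex quadratic with unique minimizer the mean $\frac1k\sum_{i}p_{i}$. Hence, for \emph{any} $\mathbf{g}$, the optimal $Z$ is $Z(j)=\frac1k\sum_{i}g_{i}\cdot X_{i}(j)$; this is the second assertion, and it does not use minimality of $\mathbf{g}$.

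Next I would isolate the translational dependence. Writing $g_{i}\cdot X_{i}=Q_{i}X_{i}+\mathbf{1}v_{i}$, let $a_{i}$ be the centroid of $X_{i}$, $\tilde X_{i}=X_{i}-\mathbf{1}a_{i}$ its centered version, $c_{i}=Q_{i}a_{i}+v_{i}$ the centroid of $g_{i}\cdot X_{i}$, and split $Z=\tilde Z+\mathbf{1}\bar z$ into centered part and centroid. Expanding $\lVert g_{i}\cdot X_{i}(j)-Z(j)\rVert^{2}=\lVert(Q_{i}\tilde X_{i}(j)-\tilde Z(j))+(c_{i}-\bar z)\rVert^{2}$ and using that the resulting cross term sums to $0$ over $j$ (because $\tilde X_{i}$ and $\tilde Z$ are centered) yields
\[
\aE(\mathbf{X},\mathbf{g},Z)=\frac1k\sum_{i=1}^{k}\sum_{j=1}^{n}\lVert Q_{i}\tilde X_{i}(j)-\tilde Z(j)\rVert^{2}\;+\;\frac{n}{k}\sum_{i=1}^{k}\lVert c_{i}-\bar z\rVert^{2}.
\]
The first term depends only on the orthogonal parts $Q_{i}$ and on $\tilde Z$, and the second is nonnegative and can be driven to $0$: starting from a global minimizer (which exists by the preceding proposition, since $\Aff$ is the semidirect product of $O(d)$ with the translation group), keep the $Q_{i}$ and $\tilde Z$ but set $v_{i}=-Q_{i}a_{i}$ and $\bar z=0$; this does not increase $\aE$, hence still gives a global minimizer, now one in which every $g_{i}\cdot X_{i}$ has centroid the origin. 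Re-applying the first step to this minimizer then replaces $Z$ by the pointwise average $\frac1k\sum_{i}g_{i}\cdot X_{i}(j)$, whose centroid is $\frac1k\sum_{i}c_{i}=0$, so the centering survives and both conditions hold simultaneously.

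I expect no genuine obstacle; the points requiring care are bookkeeping. The one computation worth writing out is the vanishing of the cross term in the displayed identity, since that is exactly what decouples the rotational and translational parts of the problem. And because of the translation ambiguity remarked on just before the proposition, it is \emph{not} true that every minimizer has centroids at the origin, so I would phrase the conclusion as the existence of a minimizer with the two listed properties — equivalently, that the minimum value is attained there — and make sure the write-up does not overclaim.
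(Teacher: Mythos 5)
Your proposal is correct, but it takes a genuinely different route from the paper's proof. The paper argues variationally: it differentiates the loss along a path that translates one configuration $g_q$ (and then along a path perturbing one column $Z(r)$) and reads off the first-order conditions at a minimum, concluding that all the $g_i\cdot X_i$ and $Z$ share a common centroid and that $Z$ is the pointwise mean; the placement of that common centroid at the origin is then handled exactly as you do, via the first-fixed normalization or by translating a given solution. You instead prove two exact algebraic facts: the loss decouples over $j$, so for \emph{any} fixed $\mathbf{g}$ the unique optimal $Z$ is the pointwise mean, and the centered-part-plus-centroid splitting (with the cross term vanishing because $\tilde X_i$ and $\tilde Z$ are centered) separates the loss into a rotational term plus $\tfrac{n}{k}\sum_i\lVert c_i-\bar z\rVert^2$, which an exchange argument starting from an existing global minimizer drives to zero without increasing the loss. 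Your version is more elementary (no derivatives) and yields slightly more in one direction — the optimal $Z$ for arbitrary $\mathbf{g}$, which is what the ALS update actually uses — whereas the paper's derivative computation buys something it reuses later: the same variational setup characterizes critical points of the constrained loss and feeds into the Hessian computation in Section~\ref{sec:ALSanal}, and it shows the common-centroid condition is a necessary condition at \emph{every} minimizer, not just at one representative. Your closing caveat about the translation ambiguity matches the paper's own treatment (first-fixed formulation, or translating a solution to center it), so neither write-up overclaims.
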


\begin{proof}
We begin with the translation part.
For fixed
$\mathbf{X},\mathbf{g}$, $Z$, let $q\in 
1,\dotsc,k$, let $v\in \bR^{d}$, and consider the path $\mathbf g(t)$
in $G^{k}$ where $g_{i}(t)=g_{i}$ for $i\neq q$, and $g_{q}(t)$ is the
composite of $g_{q}$ followed by the translation $x\mapsto x+tv$.
Then
\begin{align*}
\left.\frac{d}{dt}\right|_{t=0}\hspace{-1em}
\mathcal{E}(\mathbf{X},\mathbf{g}(t),Z)
&=\frac2k\sum_{i=1}^{k}\sum_{j=1}^{n}\left( (g_{i}\cdot X_{i}(j)-Z(j))\cdot 
\left.\frac{d}{dt}\right|_{t=0}\hspace{-1em}
(g_{i}(t)\cdot X_{i}(j)-Z(j))\right)\\
&=\frac2k\sum_{j=1}^{n}(g_{q}\cdot X_{q}(j)-Z(j))\cdot v.
\end{align*}
If $\mathbf{g},Z$ gives a minimum for
$\mathcal{E}(\mathbf{X},\mathbf{g},Z)$ then this derivative must
be zero for every $q$ and $v$, which implies that 
\[
\sum_{j=1}^{n}Z(j)=\sum_{j=1}^{n} g_{q}\cdot X_{q}(j)
\]
for all $q$, and so all the $g_{i}\cdot X_{i}$ and $Z$ must have the same
centroid.  Thus, the minimum can only occur for a $g_{i}$ that centers
$X_{i}$ on the centroid of $Z$. In the first fixed formulation,
the common centroid of $Z$ and the $X_{i}$ is then the origin.
More generally, taking advantage of the symmetry of the problem as a
whole, given a solution $\mathbf{g},Z$, there exists a solution
with the common centroid at the origin by applying an appropriate
translation.

To eliminate $Z$ as a variable, if we keep $\mathbf{g}$ fixed
and for some $r\in 1,\dotsc,n$, let $Z_{t}$ be the path with
$Z_{t}(j)=Z(j)$ for $j\neq r$ and $Z_{t}(r)=Z(j)+tv$, we
then have
\begin{align*}
\left.\frac{d}{dt}\right|_{t=0}\hspace{-1em}
\mathcal{E}(\mathbf{X},\mathbf{g},Z_{t})
&=\frac2k\sum_{i=1}^{k}\sum_{j=1}^{n}\left( (g_{i}\cdot X_{i}(j)-Z(j))\cdot 
\left.\frac{d}{dt}\right|_{t=0}\hspace{-1em}
(g_{i}\cdot X_{i}(j)-Z_{t}(j))\right)\\
&=\frac2k\sum_{i=1}^{k}(g_{i}\cdot X_{i}(r)-Z(r))\cdot v.
\end{align*}
In the case when $\mathbf{g},Z$ gives a minimum for the loss
functional, we then conclude
\[
Z(r)=\frac1k\sum_{i=1}^{k}g_{i}\cdot X_{i}(r)
\]
for all $r$.  The minimum can only occur when $Z(j)$ is the
average over $i$ of the $g_{i}\cdot X_{i}(j)$.
\end{proof}

By Proposition~\ref{prop:elimvar}, we do not need to search over the
space of $Z$ and we can restrict to searching for the orthogonal
transformation part of the $g_{i}$. This leads to the centered
formulation of the problem.

\begin{defn}\label{defn:centered}
The \emph{centered formulation} of the generalized Procrustes problem
for the group of affine isometries is the following: given $\mathbf{X}$, a
set of $k$ input configurations of $n$ points in $\bR^{d}$ whose
centroids are the origin, determine the optimal orthogonal
transformations $\mathbf{Q}=(Q_{i})\in O(d)^{n}$ that minimize the loss functional 
    \begin{equation*}
        \mathcal{E}(\mathbf{X}, \mathbf{Q}) = \frac{1}{k} \sum_{i=1}^k
	\lVert Q_{i}X_i- Z\rVert^2_{F}
=\frac1k\sum_{i=1}^{k}\sum_{j=1}^{n}\lVert Q_{i}X_{i}(j)-Z(j)\rVert^{2}
    \end{equation*}
where $Z(j)=\frac1k\sum Q_{i}X_{i}(j)$.
\end{defn}

There is always a solution with $Q_{1}=\Id$; the centered first fixed
formulation is to find the optimal $\mathbf{Q}$ subject to the
restriction that $Q_{1}=\Id$.

In the centered formulation, because $Z$ is the mean of
the $Q_{i}X_{i}$, we can rewrite the loss functional in the following
form, which is sometimes more convenient:
\begin{equation}\label{eqn:procrustes_loss_final}
\mathcal{E}(\mathbf X,\mathbf Q)
=\frac1k\sum_{i=1}^{k}\bigl(\lVert Q_{i}X_{i}\rVert^{2}_{F}-
\lVert Z\rVert^{2}_{F}\bigr).
\end{equation}
Expanding out the definition of $Z$, this is then:
\begin{equation}\label{eqn:procrustes_loss_final_var}
\mathcal{E}(\mathbf X,\mathbf Q)
=\frac1k\sum_{i=1}^{k}\biggl(\lVert Q_{i}X_{i}\rVert^{2}_{F}- \lVert\frac1k\sum_{i=1}^{k} Q_{i}X_{i}\rVert^{2}_{F}\biggr).
\end{equation}

\subsection*{The alternating least squares (ALS) method for the
generalized Procrustes problem}

The background in the previous subsection suggests the following
algorithm for searching for the solution to the generalized Procrustes
problem for the group of affine isometries.  In the centered
formulation, we iteratively use the 2 matrix Procrustes problem
solution applied to $X_{i}$ and $Z$, to get better and
better matches between each $Q_{i}X_{i}$ and the mean of the
$Q_{i}X_{i}$.  For the non-centered formulation, we first reduce to
the centered formulation by translating the original configurations to
be centered on the origin.

We state the algorithm in the centered formulation, where we are given the
input configurations $\mathbf{X}$, where we assume each $X_{i}$ is
centered on the origin, and we search for the orthogonal matrices
$\mathbf{Q}$ minimizing the loss functional $\aE$ of
Definition~\ref{defn:centered}. We assume a small number $\rm{tol}$ as
a pre-selected tolerance for termination and a large integer
$\rm{max\_iter}$ for a maximum number of iterations.

\begin{alg}[Basic ALS method]\ 
\begin{enumerate}
\item[Step 0.] Initialize $Z=\frac1k\sum_{i=1}^{k}X_{i}$.
\item[Step 1.] Set $\mathrm{loss}=\aE(\mathbf{X},\mathbf{Q})$. 
\item[Step 2.] Use SVD to solve $ZX_{i}^{T}=U_{i}\Sigma_{i}V_{i}^{T}$
(for $i=1,\dotsc,k$) where
$U_{i}$, $V_{i}$ are orthogonal $d\times d$ matrices and $\Sigma_{i}$
is a
non-negative diagonal matrix with diagonal entries in decreasing order.
\item[Step 3.] Update $Z=\frac1k\sum_{i=1}^{k}U_{i}V_{i}^{T}X_{i}$.
\item[Step 4.] If $|\mathrm{loss}-\aE(\mathbf{X},\mathbf{Q})|\geq \mathrm{tol}$
and the number of iterations is less than $\rm{max\_iter}$, iterate
from Step 1. Else:
\item[Step 5.] Return $(U_{i}V_{i}^{T}),Z$.
\end{enumerate}
\end{alg}

We have written the algorithm to emphasize the concept; it admits many
tweeks to increase efficiency, some of which are discussed after the
final version of the algorithm in the next subsection.

For stability of output, we should find some normalization of the
raw output.  For the first fixed formulation, we look for a solution
$(Q_{i}),Z$ with $Q_{1}=\Id$.  Given the raw output $(Q_{i}),Z$, we get a
transformed output $(Q_{1}^{-1}Q_{i}),Q_{1}^{-1}Z$ with the same
loss functional value but with the first orthogonal transformation the
identity. 

For empirical data where the different input configurations are not
far from being rotations and translations of each other, this
algorithm in practice converges to the solution to the generalized
Procrustes problem.  For independent Gaussian random input
configurations, there are many local minima for the loss functional
that are close to the absolute minimum, and this algorithm does not
always find the transformations giving the absolute minimum, but does
in practice find transformations with loss functional close to the
minimum.

The algorithm above admits some obvious criticisms.  First, while
generically it does solve the Procrustes problem for 2 input
configurations, there is a low dimensional space of inputs where the
algorithm fails.  For example, the algorithm fails when
$X_{2}=-X_{1}$.  The correct fix for this is not to apply this
algorithm with fewer than 3 input configurations, and use the precise
not iterative algorithm in that case.  More subtly, if the input
configurations are at a unstable critical point of the loss
functional, the algorithm will terminate immediately and not find a
minimum or local minimum.  The fix for this is to require a minimum
number of iterations before termination. 
 
While this is the obvious algorithm based on the discussion above, ten
Berge~\cite{TenBerge1977} does a deeper analysis of the generalized
Procrustes problem and finds an algorithm that in practice appears to
find transformations with smaller loss functional than the basic
algorithm a large percentage of the time on inputs where the loss
functional has a large number of local minima.  The basic algorithm
above implicitly uses the fact that when $\mathbf{Q},Z$ solves the
generalized Procrustes problem, then each $Z(Q_{i}X_{i})^{T}$ is
symmetric positive semi-definite (where we are writing $Q_{i}$ for
$U_{i}V_{i}^{T}$).  In terms of the algorithm, 
this is the result of the iteration because at Step 3, we have
\[
Z(Q_{i}X_{i})^{T}=ZX_{i}^{T}Q_{i}^{T}=(U_{i}\Sigma_{i}V_{i}^{T})(U_{i}V_{i}^{T})^{T}
=U_{i}\Sigma_{i}U_{i}^{T}.
\]
However, a solution to the generalized Procrustes problem actually
satisfies the more restrictive condition that $(Z-\frac1k
Q_{i}X_{i})(Q_{i}X_{i})^{T}$ is symmetric positive semi-definite.  This leads to the
following more sophisticated algorithm.

\begin{alg}[ALS method]\ 
\begin{enumerate}
\item[Step 0.] Initialize $Q_{i}=\Id$, $Z=\frac1k\sum_{i=1}^{k}X_{i}$.
\item[Step 1.] Set $\mathrm{loss}=\aE(\mathbf{X},\mathbf{Q})$. 
\item[Step 2.] For $i$ in $1,\dotsc,k$
\begin{enumerate}
\item Use SVD to solve $(Z-\frac1k
Q_{i}X_{i})X_{i}^{T}=U_{i}\Sigma_{i}V_{i}^{T}$ for $U_{i}$, $V_{i}$ orthogonal
$d\times d$ matrices and $\Sigma_{i}$ a non-negative diagonal matrix
with diagonal entries in decreasing order. 
\item Update $Q_{i}=U_{i}V_{i}^{T}$
\item Update $Z=\frac1k\sum_{i=1}^{k}Q_{i}X_{i}$.
\end{enumerate}
\item[Step 3.] If $|\mathrm{loss}-\aE(\mathbf{X},\mathbf{Q})|\geq \mathrm{tol}$
and the number of iterations is less than $\rm{max\_iter}$, iterate
from Step 1. Else:
\item[Step 4.] Return $(Q_{i}),Z$.
\end{enumerate}
\end{alg}

We note that in the case of 2 input configurations, this algorithm
always finds the solution to the Procrustes problem (but does an extra
step from the usual 2 input configuration algorithm).

Again, for stability, we should normalize the output, for example by
replacing $(Q_{i}),Z$ with $(Q_{1}^{-1}Q_{i}),Q_{1}^{-1}Z$.

\subsection*{The ALS method for the
generalized Procrustes problem with missing points}

In the context of missing points, the configurations $X_{i}$ only have
elements of $\bR^{d}$ specified for some but not necessarily all
indices $\{1,\dotsc,n\}$.  It is still convenient to represent $X_{i}$
as a $d\times n$ matrix, where we fill in the zero column for the for
the indices where $X_{i}$ is not defined.  To keep track of which
indices are defined in a manner conducive to easily expressed matrix
operations, we let $K_{i}$ denote the $n\times n$ diagonal matrix
which has diagonal entry 1 at the indices where $X_{i}$ is defined and
$0$ at the indices where $X_{i}$ is not defined.  In this case,
whenever $X_{i}'$ is any $d\times n$ that agrees with $X_{i}$ on the
columns for which $X_{i}$ is defined, $X_{i}=X_{i}'K_{i}$. (In other
words, if we always work with $X_{i}K_{i}$, it does not matter how we
fill in the columns where $X_{i}$ is not defined.)  We assume that the
configuration $X_{i}$ is non-empty, and therefore $K_{i}$ is not the
zero matrix.  We let $n_{i}$ denote the number of points in $X_{i}$;
then $0<n_{i}\leq n$, and $n_{i}$ is the sum of the entries of
$K_{i}$.

Let $K=\sum_{i=1}^{k}K_{i}$. Then $K$ is a diagonal matrix and the
diagonal entries indicate the number of configurations in which a
particular index occurs.  Without loss of generality, we can assume
that none of these diagonal entries is zero: if it is, we can drop
that index from consideration and reindex the problem as a whole.
Then $K$ is invertible and setting $k_{j}=K_{j,j}$, $j=1,\dotsc,n$, we
have $k_{j}>0$.

In this regime, the mean of the configurations should be calculated at
each index $j=1,\dotsc,n$ using only the configurations in which that
index appears:
\[
Z(j)=\frac1{k_{j}}\sum_{i=1}^{k}(X_{i}K_{i})(j)=(\sum_{i=1}^{k}X_{i}K_{i}K^{-1})(j)
\]
(where, generalizing the convention for the configurations $X_{i}$, we
are writing $Y(j)$ for the $j$th column of an arbitrary $d\times n$
matrix $Y$). 

When some configurations are missing points, we can no
longer center all the configurations at 0 and expect the mean to be
centered at 0, and we can no longer work in the centered formulation.
Specifying an affine isometry using a linear 
isometry $Q$ and a translation vector $v$, the loss function for a
particular choice of $\mathbf{Q}=(Q_{i}), \mathbf{v}=(v_{i})$ becomes
\[
\aE(\mathbf{X},\mathbf{K},\mathbf{Q},\mathbf{v})=
\frac1k\sum_{i=1}^{k}\lVert Q_{i}(X_{i}+\mathbf{1}v_{i})-Z\rVert^{2}_{F}
\]
where
\[
Z=\sum_{i=1}^{k}Q_{i}(X_{i}+\mathbf{1}v_{i})K_{i}K^{-1}.
\]
(Here as above $\mathbf{1}$ denotes the $1\times n$ matrix of $1$s.)

The ALS algorithm also needs to be modified to account for
translations, and since the output configurations may no longer be
centered at 0, we should no longer ask for the input configurations to be
centered at 0.

\begin{alg}[ALS method with missing points]\ 
\begin{enumerate}
\item [Step 0] Initialize constants:
\[
K=\sum_{i=1}^{k}K_{i}, n_{i}=\sum_{j=1}^{n}K_{j,j}, a_{i}=\frac1{n_{i}}\sum_{j=1}^{n}(X_{i}K_{i})(j)
\]
and variables:
\[
Q_{i}=\Id, v_{i}=b_{i}=0\in \bR^{d}, Z=\sum_{i=1}^{k}X_{i}K_{i}K^{-1}
\]
\item [Step 1] Set $\rm{loss}=\aE(\mathbf{X},\mathbf{K},\mathbf{Q},\mathbf{v})$
\item [Step 2] For $i$ in $1,\dotsc,k$
\begin{enumerate}
\item Update $b_{i}=\frac1{n_{i}}\sum_{j=1}^{n}(ZK_{i})(j)$
\item\label{s:missingK} Use SVD to solve 
\[
((Z-\mathbf{1}b_{i})K_{i}-(X_{i}-\mathbf{1}a_{i})K_{i}K^{-1})
(Q_{i}(X_{i}-\mathbf{1}a_{i})K_{i})^{T}
=U_{i}\Sigma_{i}V_{i}^{T}
\]
for $U_{i}$, $V_{i}$ orthogonal
$d\times d$ matrices and $\Sigma_{i}$ a non-negative diagonal matrices
with diagonal entries in decreasing order. 
\item Update $Q_{i}=UV^{T}$, $v_{i}=b_{i}-Q_{i}a_{i}$
\item Update $Z=\sum_{i=1}^{k}(Q_{i}X_{i}+\mathbf{1}v_{i})K_{i}K^{-1}$.
\end{enumerate}
\item[Step 3.] If $|\mathrm{loss}-\aE(\mathbf{X},\mathbf{K},\mathbf{Q},\mathbf{v})|\geq \mathrm{tol}$
and the number of iterations is less than $\rm{max\_iter}$, iterate
from Step 1. Else:
\item[Step 4.] Return $(Q_{i}),(v_{i}),Z$.
\end{enumerate}
\end{alg}

There are several obvious ways to improve the efficiency of this
algorithm. We mention only a few of the most obvious as much will
depend on the implementation of the libraries.  Clearly several values
should be stored rather than 
recomputed (including the loss functional
$\aE(\mathbf{X},\mathbf{K},\mathbf{Q},\mathbf{v})$, and the
transformed configurations 
$Q_{i}(X_{i}+\mathbf{1}v_{i})K_{i}$).  Also, we should adjust the
input so that the missing points in $X_{i}$ are represented by the
zero column (to use $X_{i}$ in place of $X_{i}K_{i}$) and is centered
on zero (to use $X_{i}$ instead of $(X_{i}-\mathbf{1}a_{i})K_{i}$).
In the update of $Z$, it is more efficient to subtract off
the old value of $Q_{i}(X_{i}+\mathbf{1}v_{i})K_{i}K^{-1}$ and add the
new value rather than take the sum as written.

\section{Behavior of the alternating least squares method}
\label{sec:ALSanal}

The purpose of this section is to describe what is known about the
theoretical behavior of the alternating least squares (ALS) method to
search for solutions to the
generalized Procrustes problem, which is the last step in our
main algorithm.  We review the results about this step needed for 
understanding the robustness of the main algorithm.

There are three natural questions we address:
\begin{enumerate}
\item When does the ALS method converge to a local
  optimum?

\item When are the local optima isolated?

\item How much does the output change when the input data
  are perturbed?
\end{enumerate}

All of these questions have reasonably satisfying answers more or less
in the literature, as we now review.  We discuss the case without
missing points for simplicity of notation, but the missing points case
works similarly.

\subsection*{Convergence of the ALS method}

Standard considerations imply that iterations of the ALS method
decrease the loss functional~\eqref{eqn:procrustes_loss_final}
monotonically.  This implies that the algorithm always converges.
However, examples can be produced where the ALS method outputs a
centroid that does not locally minimize the constrained loss functional, at least
in the case when we do not require the it to perform a minimum number
of iterations.  While such bad examples can be constructed by hand,
our experiments and the long literature on the subject bears out the
conjecture that generically the convergence is to a local minimum, and
it seems to always converge to a local minimum when required to
perform a reasonable minimum number of iterations.  We conjecture that
the bad examples form a low dimensional subspace of the space of all
possible input data when the number of points in each configuration is
larger than the dimension of the ambient Euclidean space.

As we explain below, up to a precision determined by the tolerance
setting for the algorithm, the ALS method always converges to a critical point for
the loss functional, constrained to the orbit of the original input configurations.
In practice, we can check that the resulting point is a local minimum using the
second derivative test.  We give a formula for the second derivative
of the constrained loss functional in the next subsection.

To see that the ALS method always converges to a critical point of the
constrained loss functional, we use the following notation.  After the
$s$th iteration of the main loop, we have a set of orthogonal
transformations that we denote here as $\mathbf{Q}^{[s]}$ and a
centroid for the transformed configurations that we denote here as
$Z^{[s]}$.  We let $\mathbf{X}^{[s]}$ denote the transformed
configurations, $X_{i}^{[s]}=Q_{i}^{[s]}X_{i}$.  The ALS method then
converges to
$\mathbf{X}^{[\infty]}=\mathbf{X}^{[\mathrm{max\_iter}]}$, and we
argue that for each $i$, the $d\times d$ matrix
$Z^{[\infty]}(X^{[\infty]}_{i})^{T}$ is (approximately) symmetric:
because $Q^{[s+1]}$ is the solution to the classical orthogonal
Procrustes problem for $X_{i}$, $Z^{[s]}$, we have that
\[
(Z^{[s]}-\frac1k Q_{i}^{[s]}X_{i})(Q_{i}^{[s+1]}X_{i})^{T}
\]
is symmetric (as discussed in Section~\ref{sec:procrustes}). This then
implies that 
\[
(Z^{[\infty]}-\frac1k X_{i}^{[\infty]})(X_{i}^{[\infty]})^{T}
=(Z^{[\infty]}-\frac1k Q_{i}^{[\infty]}X_{i})(Q_{i}^{[\infty]}X_{i})^{T}
\]
is (approximately) symmetric and since $AA^{T}$ is always symmetric,
we have that $Z^{[\infty]}(X^{[\infty]}_{i})^{T}$ is (approximately)
symmetric. 

The critical points $\mathbf X$ for the constrained loss
functional~\eqref{eqn:procrustes_loss_final_var} are precisely the
points where the $d\times d$ matrices $ZX_{i}^{T}$ are symmetric
for all $i$.  To explain this, we calculate the derivative of the
constrained loss functional.  At any point $\mathbf X$,
the tangent space of the orbit (in the centered first fixed formulation) is
canonically isomorphic to the product of $k-1$ copies of the tangent
space of $O(d)$, and so an element of this tangent space is specified by a choice of
$d\times d$ anti-symmetric matrix $A_{i}$ for
$i=2,\dotsc,k$. (To give uniform formulas, we set $A_{1}$ to be the
$d\times d$ zero matrix.) Integrating along such an element gives the path 
in the orbit $(Q_{i}(t)X_{i})$ where $Q_{i}(t)=e^{A_{i}t}$.
If we write $Z(t)$ for the centroid as a function of $t$, we then have
\[
\left.\frac{d}{dt}\right|_{t=0}\hspace{-1em}
\mathcal{E}(\mathbf{X},\mathbf{Q}(t))
=-2kZ \cdot \biggl(\left.\frac{d}{dt}\right|_{t=0}\hspace{-1em}
Z(t)\biggr)
=-2\sum_{i=2}^{k}\sum_{j=1}^{n} Z(j) \cdot A_{i}X_{i}(j).
\]
For the derivative to vanish, we therefore must have
\[
\sum_{j=1}^{n} Z(j) \cdot AX_{i}(j)=0
\]
for all $i=2,\dotsc,k$ and all $d\times d$ anti-symmetric matrices
$A$.  Equivalently, for every anti-symmetric bilinear form $\Phi$ on
$\bR^{d}$, we must have 
\[
\sum_{j=1}^{n} \Phi(Z(j),X_{i}(j))=0
\]
for all $i=2,\dotsc,k$, and this is equivalent to the requirement that
the $d\times d$ matrices 
\[
\sum_{j=1}^{n} Z(j) X_{i}(j)^{T}
\]
are symmetric for all $i=2,\dotsc,k$ (which also implies $\sum_{j}Z(j) X_{1}(j)^{T}$ is symmetric).

\subsection*{Isolation of the critical points}

We cannot expect numerical stability of the limit of the ALS method
unless the critical point it converges to is isolated.  Numerical experiments indicate
that the ALS method converges to a critical point with positive definite
Hessian for the loss functional; mathematically, such points are always
isolated local minima.

The Hessian can be calculated using the second derivative of the paths
considered in the previous subsection. For $Q_{i}=e^{A_{i}t}$, we get
\[
\left.\frac{d^{2}}{dt^{2}}\right|_{t=0}\hspace{-1em}
\mathcal{E}(\mathbf{X},\mathbf{Q}(t))
=-2\sum_{j=1}^{n}\biggl(\frac1k\biggl\|\sum_{i=2}^{k}A_{i}X_{i}(j)\biggr\|\raisebox{1.25ex}{\mbox{$^{2}$}}
+Z(j)\cdot\bigl(
\sum_{i=2}^{k}A_{i}^{2}X_{i}(j)\bigr)\biggr).
\]
For fixed $\mathbf{X}$, defining a quadratic form $q(\mathbf{A})$ by
the formula above, the Hessian at $\mathbf{X}$ is given by the formula
\[
H(\mathbf{A},\mathbf{B})=\tfrac12(q(\mathbf{A}+\mathbf{B})-q(\mathbf{A})-q(\mathbf{B})).
\]
Choosing a basis for $d\times d$ anti-symmetric matrices, the Hessian
becomes a symmetric $(k-1)(d-1)(d-2)/2$-dimensional square matrix.
The quadratic form $q$ is positive definite if and only if the resulting matrix
has only positive eigenvalues.

\subsection*{Perturbation of the input data}

The ALS method is stable in small perturbations of
generic input data in the following sense: for any $\epsilon>0$ and
for every point $\mathbf{X}$ in the data space with property that the
$d\times d$ matrices $Z X_{i}^{T}$ is non-singular for all $i$,
there is a neighborhood around 
$\mathbf{X}$ (whose size depends on $\mathbf{X}$) where the ALS
method converges to a point within $\epsilon$ of the limit
$\mathbf{X}^{[\mathrm{max\_iter}]}$ for $\mathbf{X}$.  This stability is a consequence of
two basic well-known stability results for the singular value
decomposition, as we now explain.

The main result of~\cite[2.1]{Soderkvist} studies stability of the
solution of the orthogonal Procrustes problem.  In our notation, it
shows that when each point cloud $X_{i}$ and the pointwise mean $Z$
are in general position, then for
$\epsilon>0$ small enough, there is a neighborhood around the $X_{i}$ where the orthonormal matrix $Q_{i}$ solving the orthogonal
Procrustes problem for $X_{i}$ and $Z$ has Frobenius norm less
than $\epsilon$.  The size of the neighborhood depends on the smallest
two singular values of the $Z X_{i}^{T}$ (which are non-zero under the
non-singular hypotheses).  We
have control over the change in singular values under perturbations by
Mirsky's theorem~\cite[\S2]{Stewart}.

Putting these together and using the fact that when actually
implemented, the ALS method has finitely many iterations,
we can conclude stability as described above.  While the theoretical
bounds give very pessimistic estimates of the size of neighborhood of
control, in practice we find experimentally that the ALS method
is stable for perturbations roughly the same order of
magnitude as $\epsilon$.  In the case when the point clouds are
all isometric up to small perturbations, the stability increases and
input perturbations of order $\epsilon$ give output perturbations of 
order $\epsilon^{2}$ (see \cite[3.1]{Sibson1979} and the proof of 
\cite[6.1]{LangronCollins}). 

\section{Stability and robustness of the algorithm}\label{sec:stability}

Under reasonable hypotheses on the input data, with high probability,
the algorithm has the following stability and robustness properties.

\subsection*{Stability in choice of subsamples}

The algorithm is designed for data expected to approximate a
contractible neighborhood of a manifold embedded in a high dimensional
Euclidean space (or a disjoint union of such).  When this is the case,
with high probability a uniformly randomly chosen set of subsamples
will have a unique good cluster, which is the restriction of a unique
good cluster for the set of all subsamples~\cite{BGMP2014}.  The
centroid of the unique cluster of the random subsamples will
approximate the centroid of the unique good cluster for all subsamples
in Procrustes distance.

\subsection*{Stability in presence of noise in data values}

If the noise is small enough that it results in only a small
distortion in the results of the dimensionality reduction algorithm, the
stability of the Procrustes algorithm discussed in
Section~\ref{sec:ALSanal} combined with the stability in choice of
subsamples implies that the resulting final embedding in the presence
of small noise will be close in the Procrustes distance to the
embedding that would have been produced in the absence of noise.

\subsection*{Robustness in presence of outliers}

Tautologically, outliers that consistently distort the embeddings for
any subsample containing them will not be in any of the subsamples in
the unique good cluster.  Thus, they will be identified as outliers by
the algorithm and omitted from the final embedding.  In some cases,
the dimensionality reduction algorithm may still return good embeddings in
the presence of a small number of outliers that are comparatively
close to the rest of the points of the subsample.  When these
embeddings are averaged with the relatively larger number of
embeddings in the good cluster that do not contain the outliers, their
overall contribution to the final embedding becomes small.
Considering the two cases, we see that even with addition of a small
number of outliers of any kind the output of the algorithm is a
low-distortion embedding of the majority of the data that is close in
Procrustes distance to the embedding that would be computed if the
outliers were omitted.

\subsection*{Robustness in bad parameter choices}

As in the case of sufficiently bad outliers, samples corresponding to
parameter choices that result in distorted embeddings that are far
from the unique good cluster will simply be discarded.  As a
consequence, the final output is generally insensitive to a small
number of isolated bad parameter choices.

\section{Synthetic experiments in manifold learning}
\label{sec:synthetic}

This section describes some experiments with synthetic data.  The
first set of experiments
(Subsections~\ref{ss:swissbegin}--\ref{ss:swissend}) validate the claim
about robustness of the algorithm via numerical experiments.  It uses
the familiar Swiss roll example.  In this example, our hypotheses
about the data set holds: namely it is a high(er) dimensional
embedding of a contractible subspace of a low dimensional manifold (or
a disjoint of such).
Our algorithm consistently constructs a good 2-dimensional embedding
even with the addition of significant noise, outliers, and parameter
variation. The second example (Subsection~\ref{ss:bucky}) explores what happens when these
hypotheses are violated, using an analysis of data lying on
``buckyballs'', which are spherical and not contractible. This data
does not admit a low-distortion embedding into $\bR^2$ and this can be
seen in the intermediate steps of our procedure. Clustering the
embeddings in Procrustes space reveals this failure and moreover
allows us to analyze the structure of the collection of embeddings.
These steps give clear indication that no good $2$-dimensional
embedding exists.  Finally, we explore the use of our algorithm to
handle data sets too large to analyze without subsampling
(Subsection~\ref{ss:large}). 

\subsection{Robustness in the presence of ambient noise concentrated around the samples}
\label{ss:swissbegin}

This subsection describes a warmup experiment that illustrates the way
that our algorithm corrects for some of the variation introduced by
subsampling.  In order to make this clearer, rather than simply
subsampling noiseless data, we add noise {\em to the subsamples}.  In
principle this could model a situation in which we do not have access
to the data except through noisy subsamples, but this is not a common
use-case.

In this experiment we simulate making multiple noisy measurements on
the same data set, which we use as our subsamples for our algorithm.  We
create a Swiss roll dataset with 2000 points.  Then, we create 200
simulated noisy measurements by subsampling and injecting additive
Gaussian noise to the subsamples.  Unpacking the steps of our
algorithm, the first step is to apply Isomap, and we see that there
are two broad classes of resulting embeddings: one in which the Swiss
roll is successfully unrolled into a sheet, and one in which the Swiss
roll remains coiled up. See
Figure~\ref{fig:noisy_swissroll_samples}. (This is what is expected;
e.g., see~\cite{Balasubramanian2002}.)

     \begin{figure}
         \centering
         \includegraphics[width=0.7\textwidth]{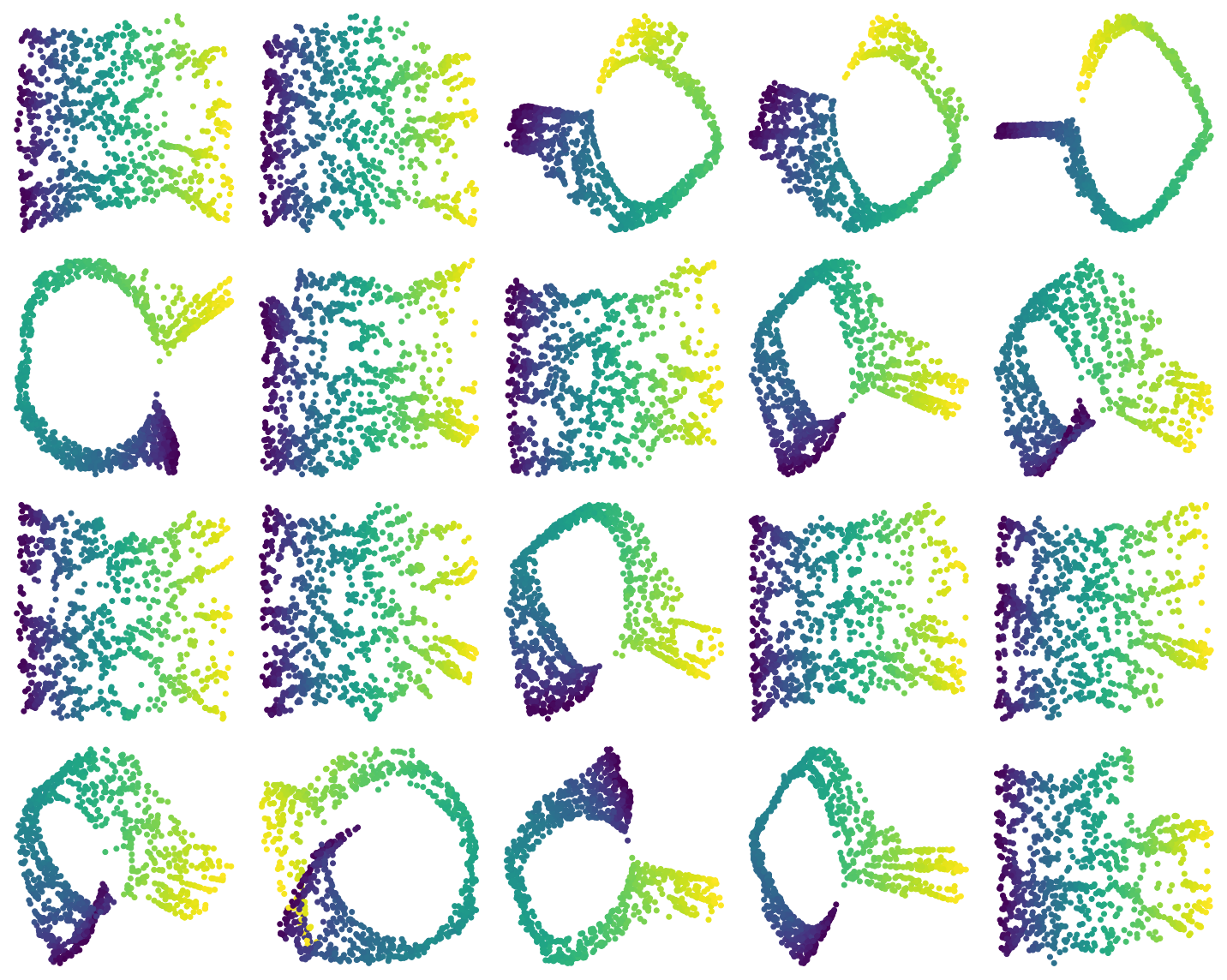}
         \caption{Some example outputs of Isomap applied to a noisy Swiss roll dataset.}
         \label{fig:noisy_swissroll_samples}
     \end{figure}

The next step in the algorithm is to calculate Procrustes distances
between these embeddings.  We have visualized the resulting graph
using multidimensional scaling MDS) in
Figure~\ref{fig:noisy_swissroll_mds}.  The two types of embeddings
roughly divide into two major clusters, one consisting of unrolled
sheets and one consisting of coiled up sheets.  The appearance of the
two clusters indicate the lack of robustness in applying Isomap to the
Swiss roll with the current level of added noise.  We also observe
that while most unrolled outputs are close together, the coiled
versions can differ greatly among themselves. (The unrolled outputs
form a dense cluster while the coiled outputs do not.) The next step in the
algorithm is to calculate the persistent homology of each embedding to
identify the cluster consisting of the embeddings that have only very small
(noise) loops in $PH_1$.  This is precisely the cluster of the
unrolled outputs.

\begin{figure}
         \centering
         \includegraphics[width=0.7\textwidth]{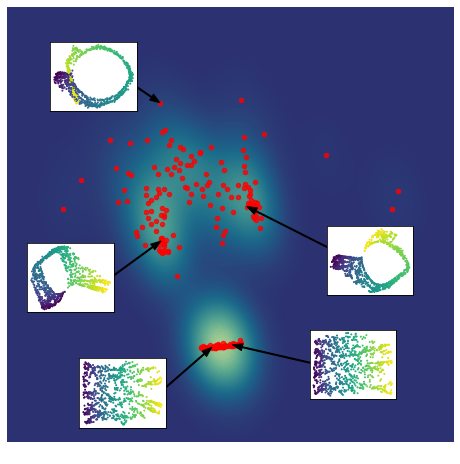}
         \caption{An MDS representation of the Procrustes distances between Isomap outputs of 200 noisy Swiss rolls: each point represents an output, and the plot is shaded according to density of the points.  Some representative points are indicated together with their corresponding Isomap embedding.}
         \label{fig:noisy_swissroll_mds}
     \end{figure}

The final step is to align and average the embeddings in the
unrolled cluster.  The final averaged embedding is illustrated in Figure~\ref{fig:noisy_swissroll_average}.

     \begin{figure}
         \centering
         \includegraphics[width=0.4\textwidth]{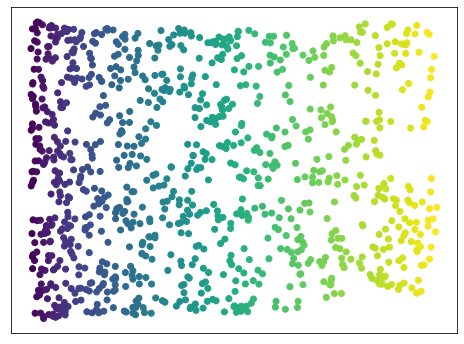}
         \caption{The Procrustes-aligned Isomap embedding corresponding to the cluster of unrolled Swiss rolls.}
         \label{fig:noisy_swissroll_average}
     \end{figure}

\subsection{Robustness in presence of ambient noise concentrated around the manifold}

In this experiment we simulate sampling from a data set corrupted by
ambient noise.  We again create a Swiss roll dataset with 2000 points.
Then, we inject additive Gaussian noise to this dataset, so that
Isomap produces a corrupted coiled output when run on the whole data
set.  See Figure~\ref{fig:Gaussiancoiledisomap}.  When we randomly 
subsample this dataset to obtain 500 samples of 1000 points each,
for almost all of these samples the result of Isomap is a coiled
embedding; only a handful are unrolled into a sheet.  See
Figure~\ref{fig:Gaussiancoiledsubsamples}.

The next step in the algorithm is to calculate Procrustes distances
between these embeddings and cluster.  In this case, there are many
clusters, but calculating the persistent homology of each embedding
identifies a dense cluster consisting of the embeddings
that have only very small (noise) loops in $PH_1$, in contrast to most of
the clusters which are diffuse and have significant loops in $PH_1$.
See Figure~\ref{fig:Gaussiancharts}.

Finally, we align and average the embeddings in the unrolled cluster.
The final averaged embedding is illustrated in
Figure~\ref{fig:Gaussianaverage}.

\begin{figure}
\begin{subfigure}[t]{0.45\textwidth}
\centering
\includegraphics[width=0.8\textwidth]{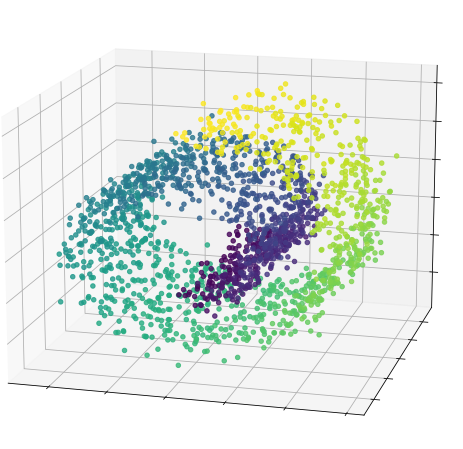}
\caption{The noisy swiss roll.}
\end{subfigure}
\begin{subfigure}[t]{0.45\textwidth}
\centering
\includegraphics[width=0.8\textwidth]{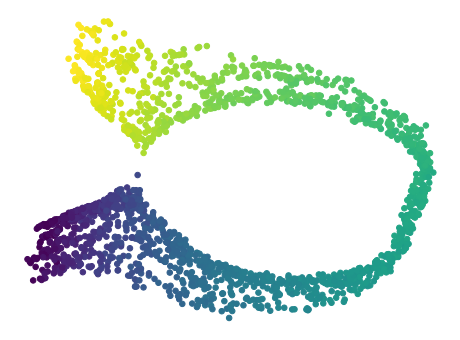}
\caption{Isomap does not return the correct embedding.}
\end{subfigure}
\caption{Additive Gaussian noise in the ambient space.}\label{fig:Gaussiancoiledisomap}
\end{figure}

\begin{figure}
\centering
\includegraphics[width=0.8\textwidth]{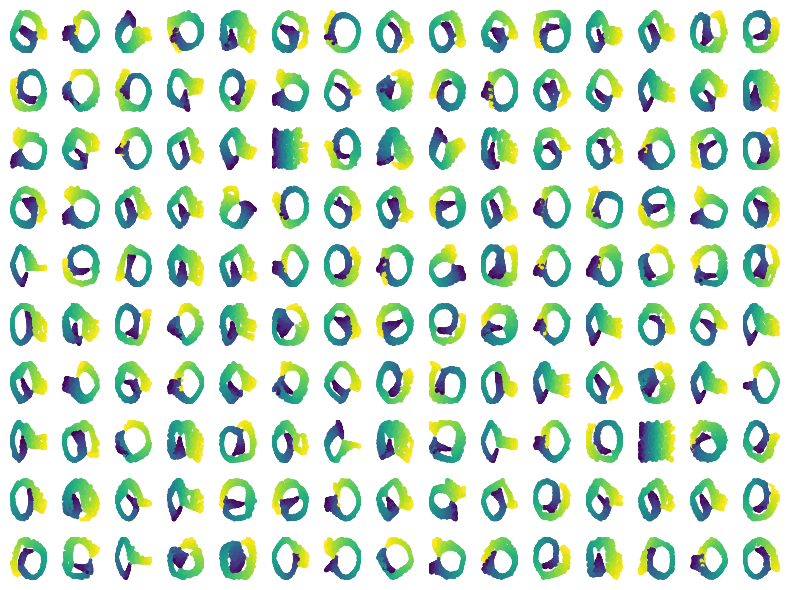}
\caption{All but a few of the subsamples result in coiled Isomap embeddings.}\label{fig:Gaussiancoiledsubsamples}
\end{figure}

\begin{figure}
\begin{subfigure}[t]{0.45\textwidth}
\centering
\includegraphics[width=0.8\textwidth]{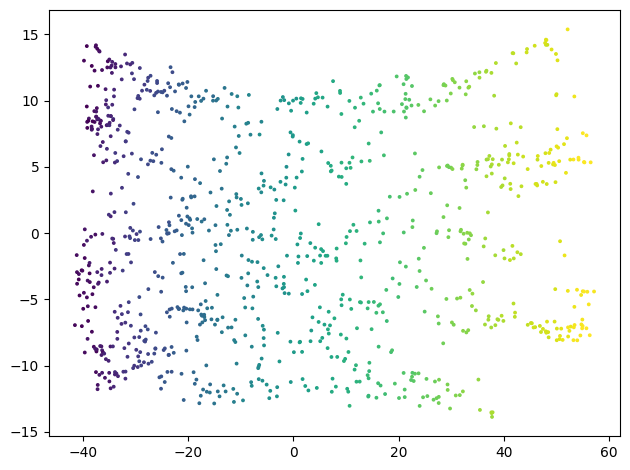}
\caption{One of the few subsamples resulting in an unrolled coordinate chart.}
\end{subfigure}
\begin{subfigure}[t]{0.45\textwidth}
\centering
\includegraphics[width=0.8\textwidth]{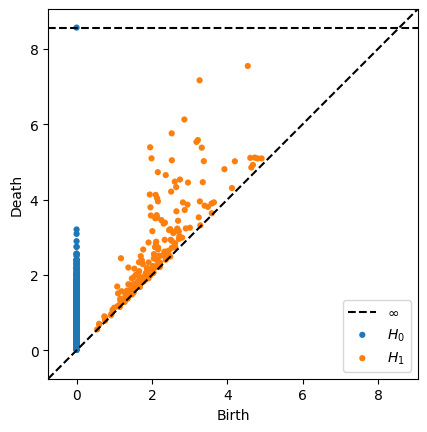}
\caption{The persistence diagram has a single large class in $PH_0$
  and no large classes in $PH_1$.}
\end{subfigure} \\

\begin{subfigure}[t]{0.45\textwidth}
\centering
\includegraphics[width=0.8\textwidth]{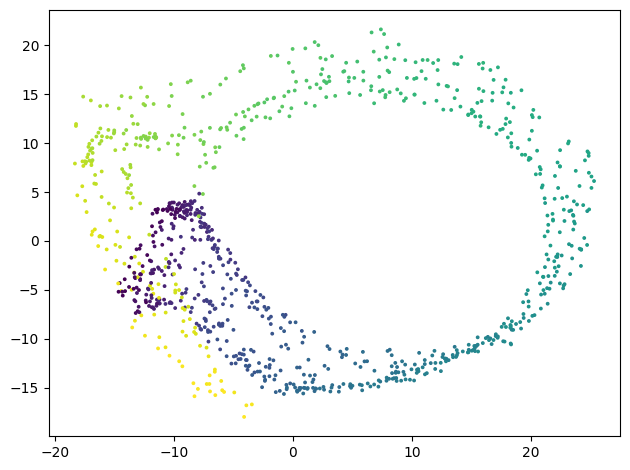}
\caption{A generic subsample resulting in a coiled coordinate chart.}
\end{subfigure}
\begin{subfigure}[t]{0.45\textwidth}
\centering
\includegraphics[width=0.8\textwidth]{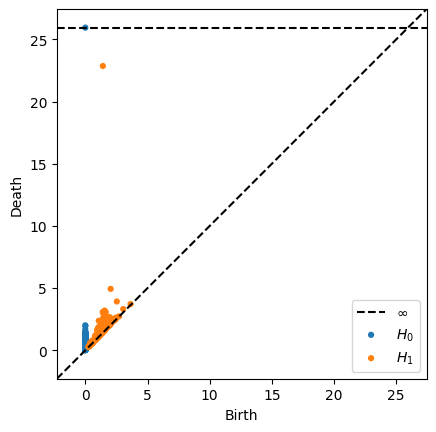}
\caption{The persistence diagram has a single large class in $PH_0$
  but also has a very large class in $PH_1$ capturing the loop.}
\end{subfigure}
\caption{Comparing clusters in Procrustes space using persistent
  homology.}
\label{fig:Gaussiancharts}
\end{figure}

\begin{figure}
\centering
\includegraphics[width=0.8\textwidth]{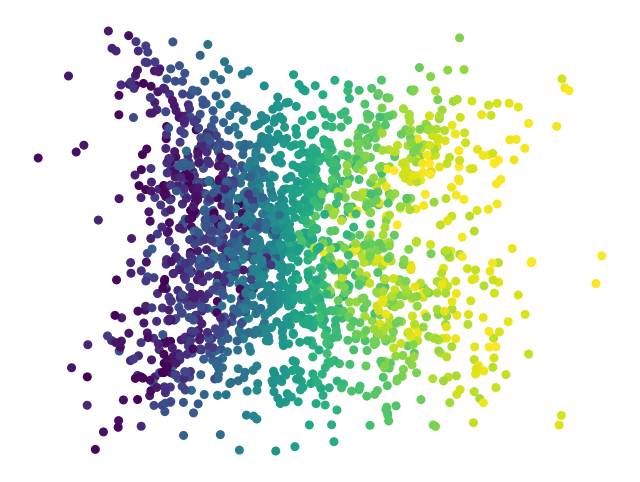}
\caption{The Procrustes-aligned Isomap embedding corresponding to the cluster of unrolled Swiss rolls.}\label{fig:Gaussianaverage}
\end{figure}

\subsection{Robustness in presence of outliers}

In these experiments, we imagine a single dataset of measurements
where one or more bad sample points (``outliers'') are included.  We
work again with the Swiss roll.  Even a single adversarial outlier can
cause a short-circuit in the inferred connectivity of the manifold,
and the resulting Isomap embedding will fail to unroll the Swiss
roll. See Figure~\ref{fig:outlier}, parts~\subref{figs:oa}
and~\subref{figs:ob}.  When we randomly subsample this dataset to obtain 200
samples of 600 points each, most of these samples do not contain the
outlier, and in these cases Isomap manages to unroll the Swiss
roll. See Figure~\ref{fig:outlier}, part~\subref{figs:oc}.  In some
instances, Isomap unrolls the roll even if the outlier is included in
the subsample if the other non-outlier points involved in the
short-circuiting are omitted.  Finally, we align all the outputs in
the good cluster and compute the centroid: the result is a
successfully unrolled Swiss roll. See Figure~\ref{fig:outlier},
part~\subref{figs:od}.

    \begin{figure}
        \centering
        \begin{subfigure}[t]{0.45\textwidth}
            \centering
            \includegraphics[height=1.4in]{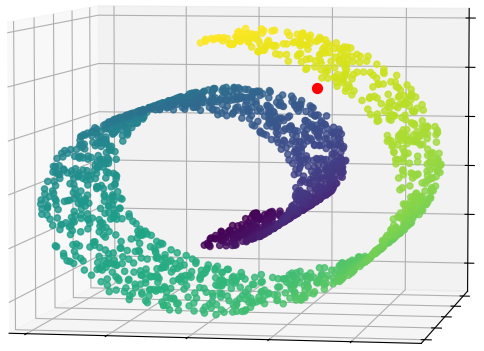}
            \caption{Original data in 3D}\label{figs:oa}
        \end{subfigure}%
        \begin{subfigure}[t]{0.45\textwidth}
            \centering
            \includegraphics[height=1.4in]{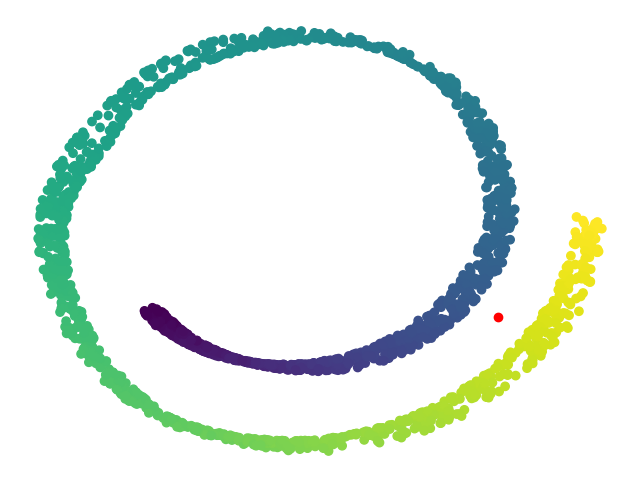}
            \caption{Isomap result}\label{figs:ob}
        \end{subfigure}
        \\
        \begin{subfigure}[t]{0.45\textwidth}
            \centering
            \includegraphics[height=1.4in]{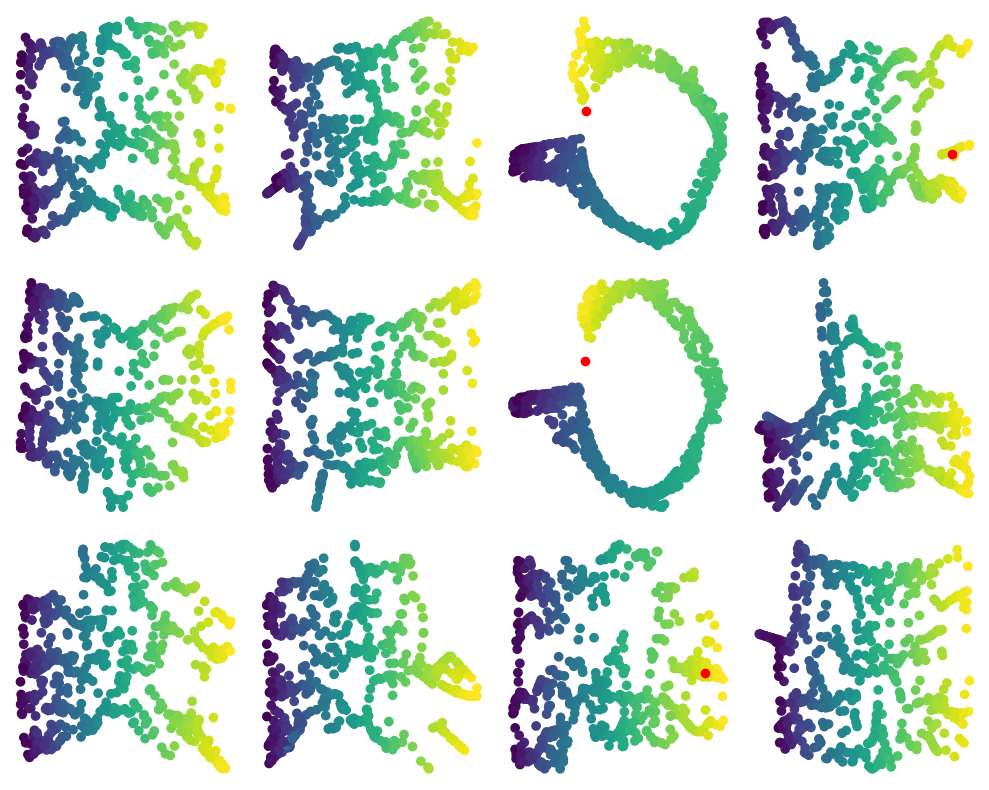}
            \caption{Isomap applied to subsamples}\label{figs:oc}
        \end{subfigure}%
        \begin{subfigure}[t]{0.45\textwidth}
            \centering
            \includegraphics[height=1.4in]{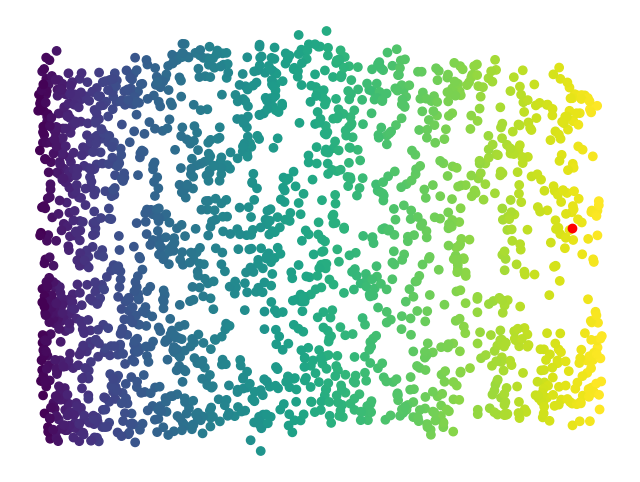}
            \caption{Final aligned output}\label{figs:od}
        \end{subfigure}
        \caption{Robustifying the output of Isomap (single outlier case)}
        \label{fig:outlier}
    \end{figure}

The previous experiment illustrates the need for some kind of
robustification (in this case by taking subsamples) even in the
presence of a single outlier.  The next experiment studies the more
expected case where there are multiple outliers.  In this experiment,
we add 100 (5\%) outliers chosen uniformly from a bounding box.  As in
the single outlier case, we subsampled the dataset 200 times to obtain
samples of size 600 and ran Isomap on these subsamples.  With this
many outliers, the coiled outputs are in the majority.  We clustered
the outputs and identified a cluster of uncoiled outputs using
persistent homology.  Then, we computed the Procrustes alignment and
averaged over the subset of these good outputs, producing the expected
unrolled Swiss roll.  See Figure~\ref{fig:outliers}.

    \begin{figure}
        \centering
        \begin{subfigure}[t]{0.45\textwidth}
            \centering
            \includegraphics[height=1.4in]{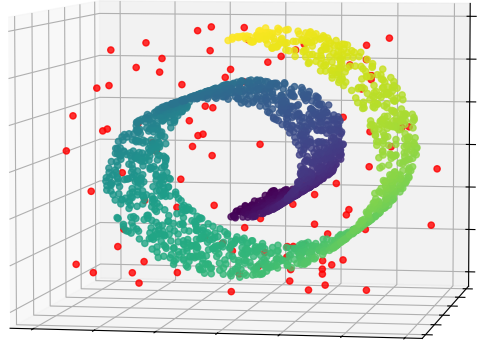}
            \caption{Original data in 3D}
        \end{subfigure}%
        \begin{subfigure}[t]{0.45\textwidth}
            \centering
            \includegraphics[height=1.4in]{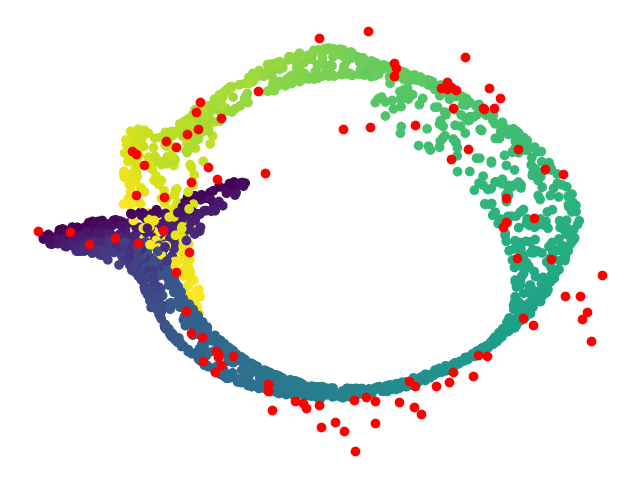}
            \caption{Isomap result}
        \end{subfigure}
        \\
        \begin{subfigure}[t]{0.45\textwidth}
            \centering
            \includegraphics[height=1.4in]{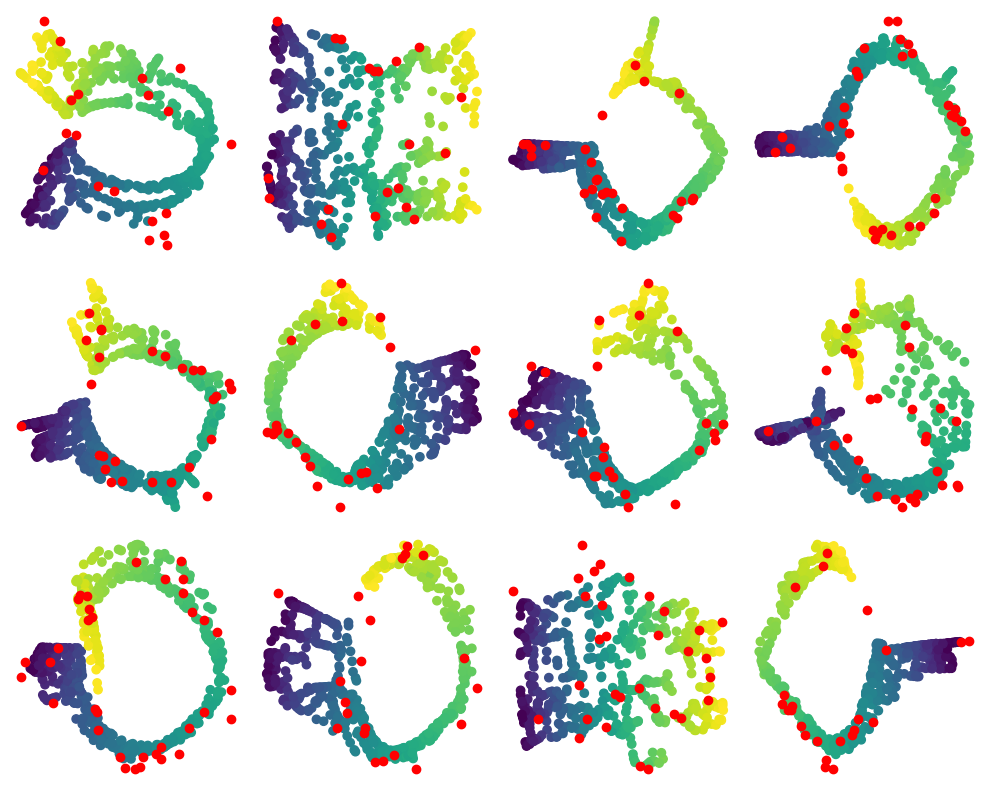}
            \caption{Isomap applied to subsamples}
        \end{subfigure}%
        \begin{subfigure}[t]{0.45\textwidth}
            \centering
            \includegraphics[height=1.4in]{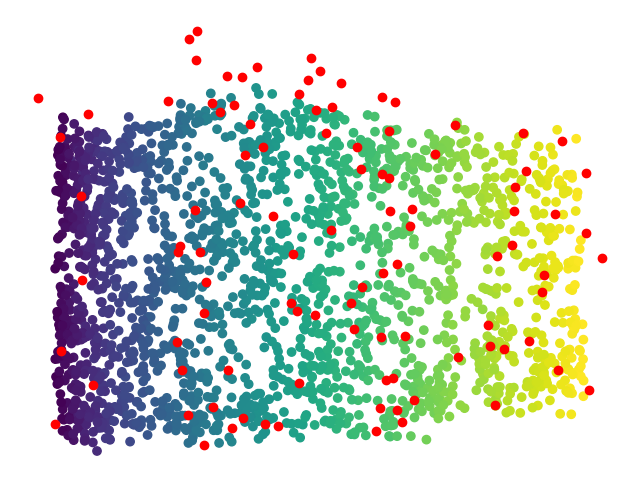}
            \caption{Final aligned output}
        \end{subfigure}
        \caption{Many outliers}
        \label{fig:outliers}
    \end{figure}

These experiments demonstrate that our procedure is robust to
outliers, in the sense that the output in the presence of a small
percentage of outliers is
a low-distortion unrolled embedding of the type we expect from Isomap
applied to noiseless samples from the underlying manifold.

\subsection{Parameter variation}
\label{ss:swissend}

All manifold learning algorithms have hyperparameters that have to be
chosen by the practitioner.  In the case of Isomap, there is a single
parameter corresponding to neighborhood size that needs to be
judiciously chosen in order to obtain good results.  The next
experiment illustrates that under the hypothesis that the data represents
a convex subset of a low dimensional manifold embedded in a higher
dimensional manifold, we can detect a cluster of good parameters using
our persistent homology methodology. 

In this experiment, we apply Isomap to the 2000 point Swiss roll
synthetic data set under a wide range of parameters. We have graphed
the heatmap of Procrustes distances of the resulting embeddings 
in Figure~\ref{fig:parameters}. We see that at
small radii, Isomap fails to find a good embedding because the points
are not connected (these have high rank large bars in $PH_{0}$),
whereas at large radii, the embeddings 
``short circuit'' in the normal direction of the manifold, leading to
coiled representations (these have a large bar in $PH_{1}$).  Between these extremes, there is a
range of values that lead to good embeddings (with no large bars in $PH_{1}$).  Moreover, these
embeddings are close together in Procrustes distances.  Within this range,
we recover the expected unrolled embedding of the Swiss roll dataset.
Furthermore, there is a sharp transition between unrolled and coiled
up representations, as shown by the large Procrustes distance between
each embedding in the unrolled cluster and each embedding in the
coiled cluster. The figure illustrates four clusters in total: the first
box shows the cluster of small radii; the next 19 boxes show the
cluster of unrolled embeddings; the next single box shows a
transitional embedding far from both  the unrolled embeddings and the
coiled embeddings; and finally, the last nine boxes show the cluster
of coiled embeddings.  A more careful analysis of the distances (which
we omit here) reveals that they correspond to the size of the gap between
sheets of the roll; this is closely related to the ``condition
number'' and reach of the manifold~\cite{Niyogi2008}.

    \begin{figure}
        \centering
        \includegraphics[width=0.8\textwidth]{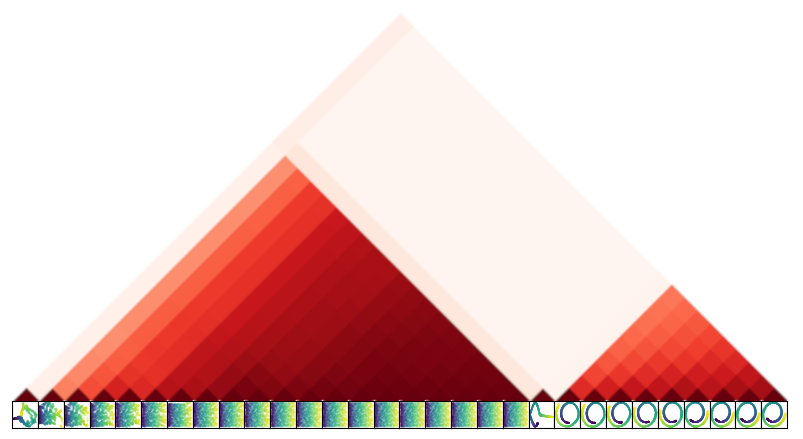}
        \caption{The Procrustes distance matrix between Isomap outputs
	for varying neighborhood size parameters, increasing from left
	to right.  The output embeddings are shown along the bottom,
	and the triangle heatmap shows the distances: the darker the
	color, the smaller the Procrustes distance. (To compare two
	outputs: intersect the $45^{\circ}$ lines emanating from them to
	find the box on the heatmap illustrating their Procrustes distance.)} 
        \label{fig:parameters}
    \end{figure}

\subsection{Detecting failure of embedding methodology}
\label{ss:bucky}

The next experiment studies a case contrary to the general hypothesis
that the data comes from a convex subset of a low dimensional
manifold.  In this experiment, we use the vertices of
buckminsterfullerene or ``buckyballs''
for the basic data .  A buckyball is a
molecule consisting of 60 carbon atoms arranged on a sphere.  Its
bonds forms hexagonal and pentagonal rings resembling the surface of a
soccer ball.  See Figure~\ref{fig:buckyball}, part~\subref{figs:bba}. As
a proxy for the sphere $S^2$, the positions of the 
atoms of a buckyball in 3D cannot be faithfully compressed down to two
dimensions, unlike the unrolling of the Swiss roll.  

We computed the Isomap outputs of 500 noisy buckyballs, and the
Procrustes distances between them.  See Figure~\ref{fig:buckyball},
part~\subref{figs:bbb}.  We visualized these distances using MDS in
Figure~\ref{fig:buckyball}, part~\subref{figs:bbc}; however, in this case
the 2-dimensional rendering obscures rather than illuminates the
geometric structure of the Procrustes graph of Isomap outputs  (as we explain
below). Nevertheless, it illustrates enough to show that unlike in the
Swiss roll studies, there are no dense clusters of embeddings in this
experiment, and so we cannot expect an averaging procedure to yield a
reasonable result.  

    \begin{figure}
        \centering
        \begin{subfigure}[t]{0.3\textwidth}
            \centering
            \includegraphics[height=1.4in]{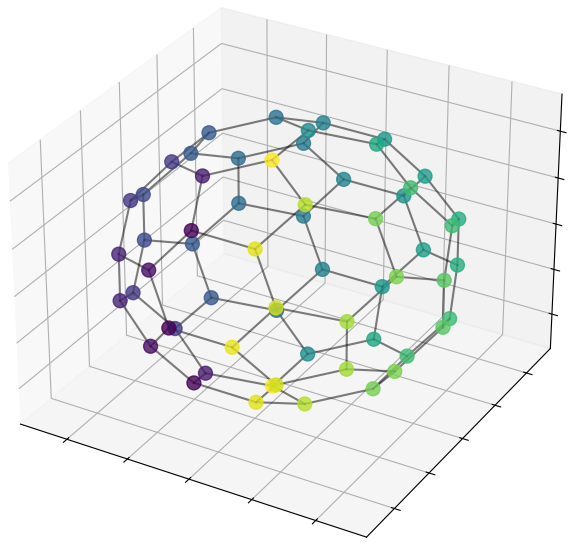}
            \caption{Original data in 3D}\label{figs:bba}
        \end{subfigure}%
        \begin{subfigure}[t]{0.3\textwidth}
            \centering
            \includegraphics[height=1.4in]{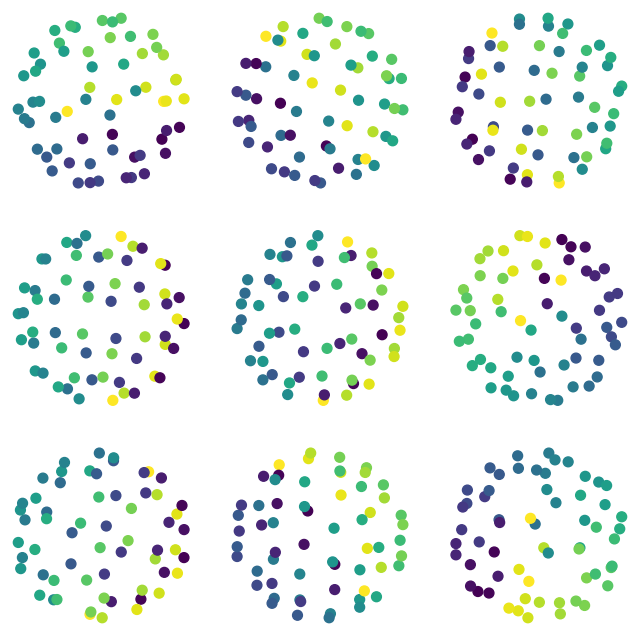}
            \caption{Isomap of subsamples}\label{figs:bbb}
        \end{subfigure}%
        \begin{subfigure}[t]{0.3\textwidth}
            \centering
            \includegraphics[height=1.4in]{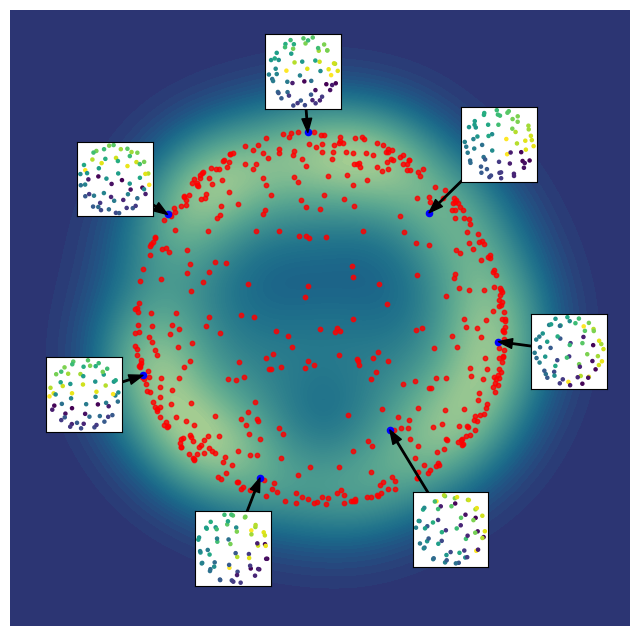}
            \caption{MDS embedding}\label{figs:bbc}
        \end{subfigure}
        \caption{Buckyball dataset}
        \label{fig:buckyball}
    \end{figure}

Although not directly related to our algorithm, we can say more about 
the geometric structure of the Isomap outputs.  We note that
the typical output is approximately a \emph{flattened
sphere}: the dimensionality reduction algorithm
is approximately linearly projecting the buckyball one
dimension down.  Thus, after centering the buckyball and the resulting
Isomap output, we are led to consider the space $V_2(\mathbb{R}^3)$ of
(orthonormal) $2$-frames in $\mathbb{R}^3$, corresponding to the basis
vectors in $\mathbb{R}^3$ that span the plane we are projecting to.
Procrustes alignment quotients out the isometries $O(2)$ of
this plane, and the Procrustes graph of outputs of Isomap on the buckyball
dataset should approximate
$V_2(\mathbb{R}^3) / O(2) \cong \mathbb{R}P^2$, the real projective
plane.  

We can test the conclusion of this thought experiment using 
persistent homology. For the projective plane, using coefficient
field $\mathbb{F}_p$, $p$ prime, 
    \begin{equation*}
        H_1(\mathbb{R}P^2; \mathbb{F}_p) \cong H_2(\mathbb{R}P^2; \mathbb{F}_p) \cong \begin{cases} \mathbb{F}_2, & \text{if } p = 2 \\ 0, & \text{if } p > 2. \end{cases}
    \end{equation*} 
Computing the persistent homology for the point cloud consisting of
the 500 Isomap outputs with the Procrustes metric with Ripser
\cite{Tralie2018}, we find persistent classes in
$PH_1(\mathbb{R}P^2; \mathbb{F}_2)$ and $PH_2(\mathbb{R}P^2;
\mathbb{F}_2)$, but not in $PH_*(\mathbb{R}P^2; \mathbb{F}_p)$ for $p
= 3$ or $5$.  See Figure~\ref{fig:buckyball-ph}.
In other words, the persistent homology signature of
the Isomap outputs matches the prediction that the landscape of Isomap
outputs at least homotopically is an approximation of 
$\mathbb{R}P^2$.

    \begin{figure}
        \centering
        \includegraphics[height=1.5in]{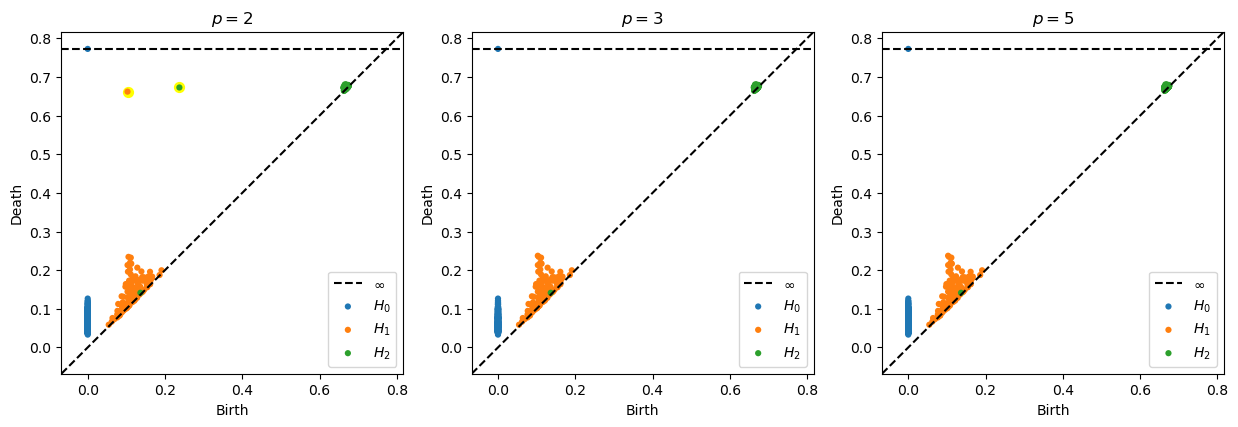}
        \caption{Persistence diagrams for the Procrustes
	graph of Isomap outputs for the buckyball dataset}\label{fig:buckyball-ph}
    \end{figure}

\subsection{Divide and conquer for very large data sets}
\label{ss:large}

Manifold learning algorithms tend to have superlinear time complexity.
This suggests that using a divide-and-conquer approach on large
datasets may result in time savings.  The idea is the same as before:
subsample the dataset, run the algorithm on the individual subsamples,
and then use the Procrustes alignment to merge the results.  A
schematic of this process is shown in Figure
\ref{fig:divide_conquer}. 
    
\begin{figure}
    \centering
    \includegraphics[width=0.8\textwidth]{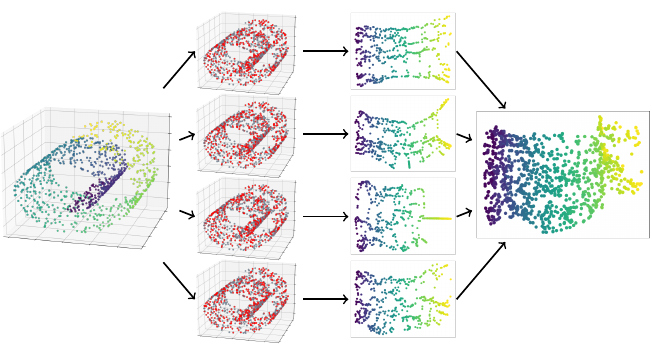}
    \caption{Starting with a large dataset, repeatedly subsample the dataset to obtain multiple subsamples of smaller size (here the subsamples are indicated in red).  Then, apply the manifold learning algorithm to each subsample and use the Procrustes alignment to average the resulting embeddings.}
    \label{fig:divide_conquer}
\end{figure}

\section{Analysis of real data coming from single-cell genomics}
\label{sec:real}

The purpose of this section is to explore an application of our
algorithm to real data.  We use two data sets as examples where
coordinates in dimensionality-reduced genomic space allows us to
determine cell types.  The first such data set comes from blood cells
(PBMC) and the second from mouse neural tissue (Tabula muris). In
these applications, in addition to Isomap, we use four other manifold
learning (dimensionality reduction) algorithms: PCA, Laplacian
eigenmaps, t-SNE, and UMAP.  In each case, we look at coordinates
resulting from applying just the dimensionality reduction algorithm
(``base case'') and our algorithm (``robustified case'').

We see interesting new phenomena in these applications.  For one
thing, using the manifold learning algorithms PCA, Laplacian
eigenmaps, and Isomap with our algorithm, we see a single large tight
cluster in the Procrustes graph of the subsamples.  However, unlike
the charts for the synthetic Swiss roll examples, both the good and
the bad embeddings have trivial $PH_{1}$, our indicator for
contractibility.  Instead, what distinguishes the good from the bad
embeddings is how evenly spread they are.  We used embeddings in
$\bR^{2}$ and the bad embeddings are ones concentrated along a single
axis, being approximately one-dimensional, whereas the good embeddings
have extent in both dimensions.  For both theoretical and practical
reasons (as explained in the discussion of the algorithm), the
embeddings that are essentially one-dimensional need to be discarded.

On the other hand, using t-SNE and UMAP, the Procrustes graph of the
subsamples are comparatively very spread out and form a single large
but extremely spread out cluster, indicating that the averaging
procedure in our algorithm cannot be expected to output reasonable
coordinates.  This dichotomy is not surprising since the PCA,
Laplacian eigenmaps, and Isomap algorithms strive to perform global
alignment of the local coordinates, whereas the t-SNE and UMAP
algorithms are designed to separate local clusters.  (This dichotomy
of behavior is a topic of interest in the genomics methodology
community, see for example~\cite{Pachter2023} for a critical review
of the use of these kinds of dimensionality reduction to produce
pictures.)

\subsection{3k PBMCs scRNA-sequencing}

The first dataset is a preprocessed version of single cell
RNA-sequencing data from 3k peripheral blood mononuclear cells (PBMCs)
from a healthy donor, made freely available by 10x Genomics.
Comprised of cells from a variety of myeloid and lymphoid lineages,
the dataset has rich cluster and continuous structure, making it a
useful dataset to benchmark new bioinformatic methods.   It can be
accessed using the \texttt{scanpy} Python package~\cite{Wolf2018} via
the following command: 
\begin{quote}
\href{https://scanpy.readthedocs.io/en/stable/generated/scanpy.datasets.pbmc3k_processed.html}{\texttt{scanpy.datasets.pbmc3k\_processed}} 
\end{quote}

As is typical in the visualization of scRNA-sequencing data, PCA is
used as a preliminary step prior to constructing the 2D embeddings: we
reduce the raw data to the first 50 principal components.  For the
base case, we then applied each of the manifold learning algorithms to
reduce the coordinates in $\bR^{50}$ to $\bR^{2}$.  For the
robustified case, we take 50 subsamples of 500 cells, and proceed with
our algorithm with each of the manifold learning algorithms used for
dimensionality reduction from $\bR^{50}$ to $\bR^{2}$.

Figure~\ref{fig:pbmc_procrustes_hist} shows the MDS plots of the Procrustes
distances between the subsample embeddings along with histograms of those values under
each of the five manifold learning algorithms.  We can see that PCA,
Laplacian eigenmaps, Isomap have much more concentrated Procrustes
distances.  And when we look at the MDS plot of the Procrustes
distances, we see there are tight clusters of the embeddings, whereas
the embeddings for t-SNE and UMAP are significantly more spread out (with a
diameter over 4 times as large).  The algorithm then proceeds for
PCA, Laplacian eigenmaps, Isomap and terminates with an error of no
remaining clusters for
t-SNE and UMAP.

For PCA, Laplacian eigenmaps, Isomap we find that the main cluster has
embeddings that are essentially two-dimensional;  the
outlying points are the embeddings that are essentially one-dimensional.  In
Figures~\ref{fig:pbmc_PCA_subsamples} and
~\ref{fig:pbmc_Isomap_subsamples} we plot the subsample embeddings for
PCA and Isomap.  (The relatively larger number of bad embeddings for
PCA in Figure~\ref{fig:pbmc_PCA_subsamples} versus Isomap
in~\ref{fig:pbmc_Isomap_subsamples} is consistent with the larger
number of embeddings outside the tight cluster as pictured in Figure~\ref{fig:pbmc_procrustes_hist}.)
We select the good cluster and apply averaging procedure
to the subsample embeddings it contains.  The output is robustified
coordinates for PCA, Laplacian eigenmaps, and Isomap.  This is
illustrated in Figure~\ref{fig:pbmc_embeddings} together with the
embeddings obtained on the entire data set.  In the figure, the data
points are colored by cell type.

\ifmam
\begin{figure}
        \centering
        \begin{subfigure}[t]{0.18\textwidth}
            \centering
            \includegraphics[height=0.75in]{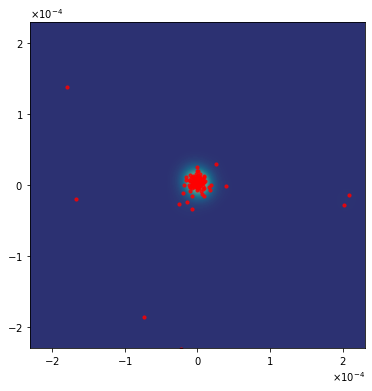}
            \caption{PCA}
        \end{subfigure}%
        \begin{subfigure}[t]{0.18\textwidth}
            \centering
            \includegraphics[height=0.75in]{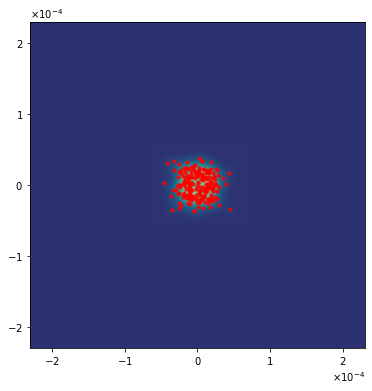}
            \caption{Isomap}
        \end{subfigure}
                \begin{subfigure}[t]{0.18\textwidth}
            \centering
            \includegraphics[height=0.75in]{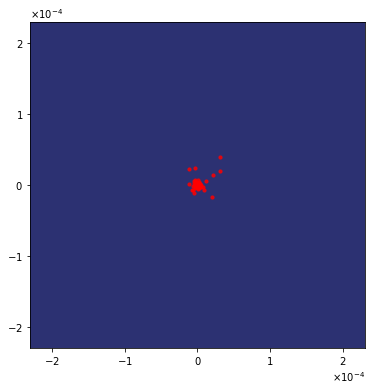}
            \caption{Laplacian eigenmaps.}
        \end{subfigure}%
        \begin{subfigure}[t]{0.18\textwidth}
            \centering
            \includegraphics[height=0.75in]{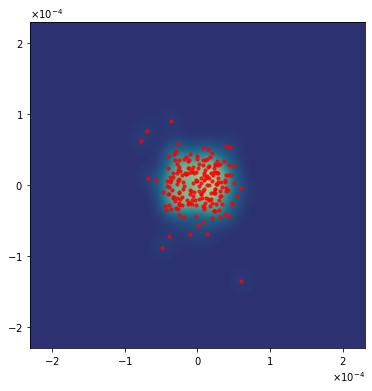}
            \caption{t-SNE.}
        \end{subfigure}%
        \begin{subfigure}[t]{0.18\textwidth}
            \centering
            \includegraphics[height=0.75in]{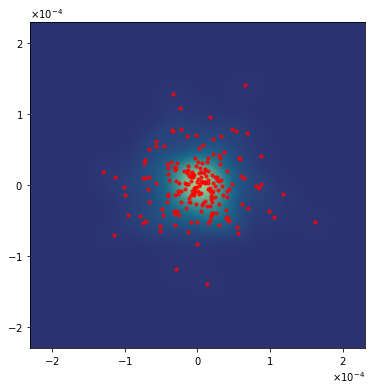}
            \caption{UMAP.}
        \end{subfigure}%
        \\
\setcounter{subfigure}{0}
                \begin{subfigure}[t]{0.25\textwidth}
            \centering
            \includegraphics[height=0.75in]{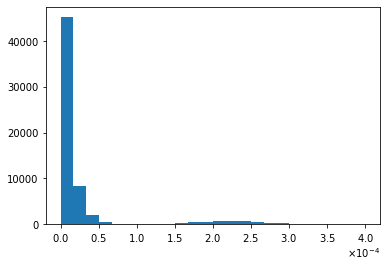}
            \caption{PCA}
        \end{subfigure}%
        \begin{subfigure}[t]{0.25\textwidth}
            \centering
            \includegraphics[height=0.75in]{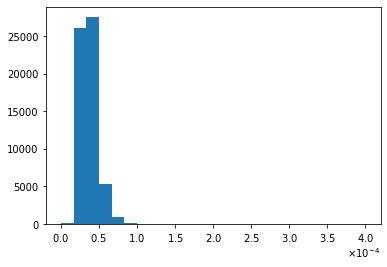}
            \caption{Isomap}
        \end{subfigure}%
        \begin{subfigure}[t]{0.25\textwidth}
            \centering
            \includegraphics[height=0.75in]{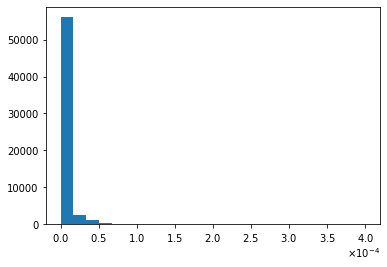}
            \caption{Laplacian eigenmaps.}
        \end{subfigure}%
\\
        \begin{subfigure}[t]{0.25\textwidth}
            \centering
            \includegraphics[height=0.75in]{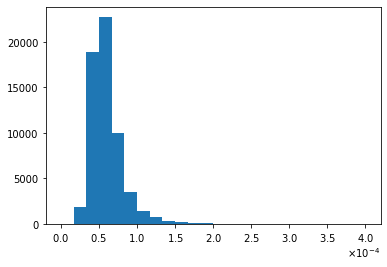}
            \caption{t-SNE.}
        \end{subfigure}%
        \begin{subfigure}[t]{0.18\textwidth}
            \centering
            \includegraphics[height=0.75in]{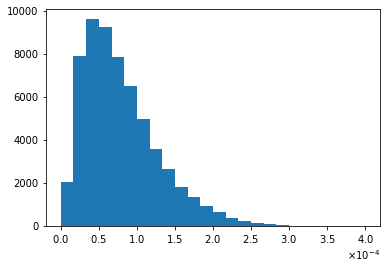}
            \caption{UMAP.}
        \end{subfigure}%

        \caption{Procrustes distances between subsample embeddings for the PBMC dataset.}
        \label{fig:pbmc_procrustes_hist}
    \end{figure}
\else
\begin{figure}
\centering
\begin{subfigure}[t]{0.45\textwidth}
\centering
	\includegraphics[height=0.75in]{procrustes_figures/pbmc_pca_procrustes.png}
            \includegraphics[height=0.75in]{procrustes_figures/pbmc_pca_procrustes_hist.png}
            \caption{PCA}
        \end{subfigure}%
        \begin{subfigure}[t]{0.45\textwidth}
            \centering
            \includegraphics[height=0.75in]{procrustes_figures/pbmc_isomap_procrustes.png}
            \includegraphics[height=0.75in]{procrustes_figures/pbmc_isomap_procrustes_hist.png}
            \caption{Isomap}
        \end{subfigure}
\\[1ex]
                \begin{subfigure}[t]{0.45\textwidth}
            \centering
            \includegraphics[height=0.75in]{procrustes_figures/pbmc_lem_procrustes.png}
            \includegraphics[height=0.75in]{procrustes_figures/pbmc_lem_procrustes_hist.png}
            \caption{Laplacian eigenmaps.}
        \end{subfigure}%
        \begin{subfigure}[t]{0.45\textwidth}
            \centering
            \includegraphics[height=0.75in]{procrustes_figures/pbmc_tsne_procrustes.png}
            \includegraphics[height=0.75in]{procrustes_figures/pbmc_tsne_procrustes_hist.png}
            \caption{t-SNE.}
        \end{subfigure}%
\\[1ex]
        \begin{subfigure}[t]{0.45\textwidth}
            \centering
            \includegraphics[height=0.75in]{procrustes_figures/pbmc_umap_procrustes.png}
            \includegraphics[height=0.75in]{procrustes_figures/pbmc_umap_procrustes_hist.png}
            \caption{UMAP.}
        \end{subfigure}%

\caption{Procrustes distances between subsample embeddings for the PBMC dataset.}
\label{fig:pbmc_procrustes_hist}
\end{figure}
\fi

\begin{figure}
\centering
\includegraphics[height=3in]{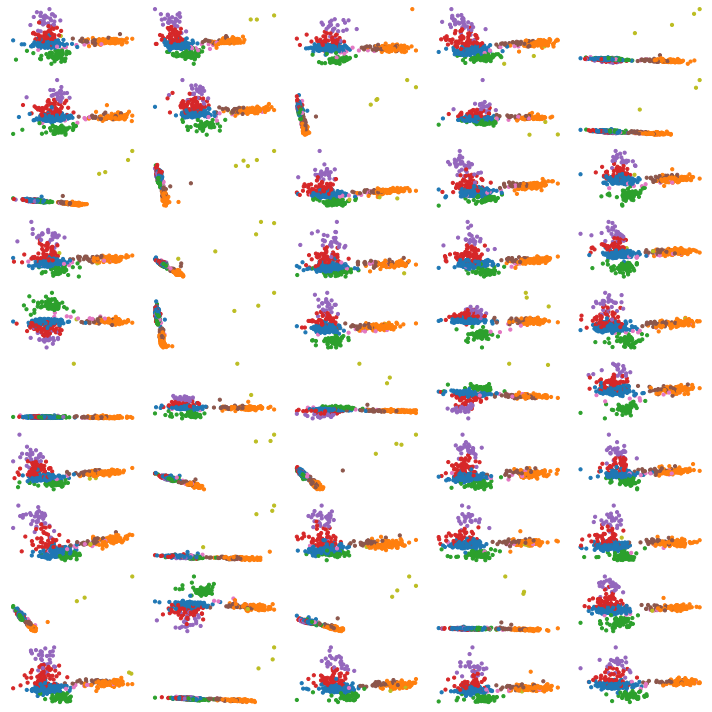}
\caption{Subsample embeddings from PCA.}\label{fig:pbmc_PCA_subsamples}
\end{figure}

\begin{figure}
\centering
\includegraphics[height=3in]{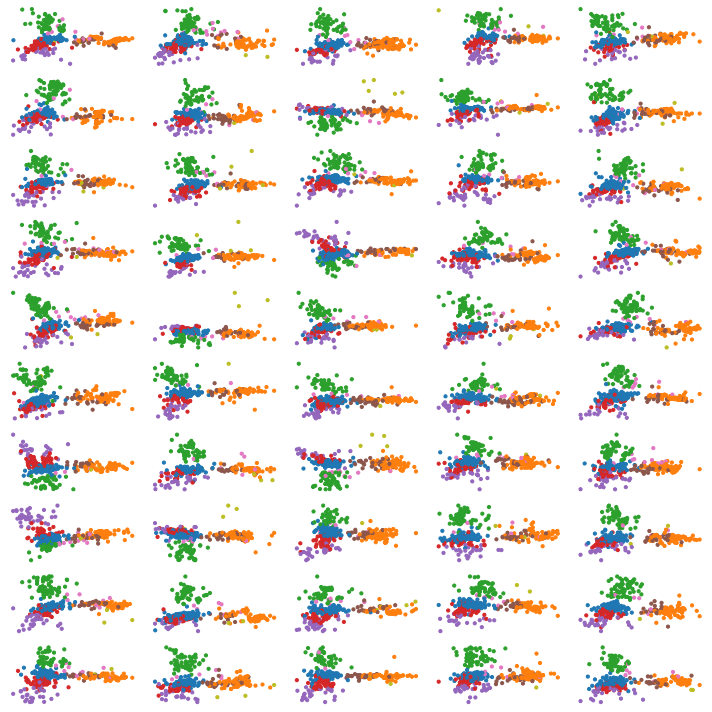}
\caption{Subsample embeddings from Isomap.}\label{fig:pbmc_Isomap_subsamples}
\end{figure}

\begin{figure}
\begin{subfigure}[t]{0.25\textwidth}
\centering
\includegraphics[height=1.5in]{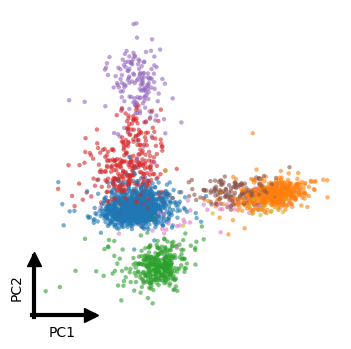}
\end{subfigure}%
\begin{subfigure}[t]{0.25\textwidth}
\centering
\includegraphics[height=1.5in]{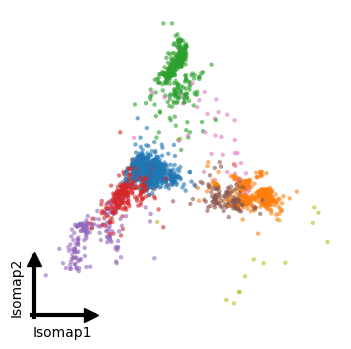}
\end{subfigure}
\begin{subfigure}[t]{0.25\textwidth}
\centering
\includegraphics[height=1.5in]{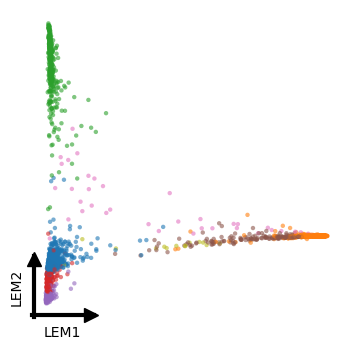}
\end{subfigure}
\\
\begin{subfigure}[t]{0.25\textwidth}
\centering
\includegraphics[height=1.5in]{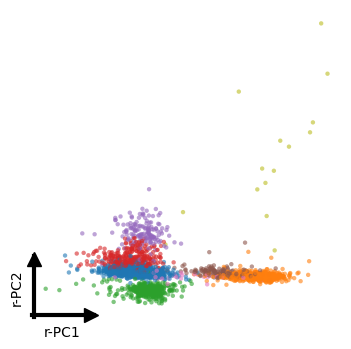}
\caption{PCA}
\end{subfigure}%
\begin{subfigure}[t]{0.25\textwidth}
\centering
\includegraphics[height=1.5in]{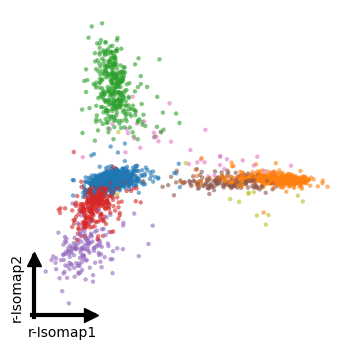}
\caption{Isomap}
\end{subfigure}
\begin{subfigure}[t]{0.25\textwidth}
\centering
\includegraphics[height=1.5in]{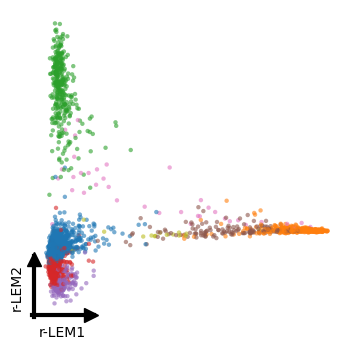}
\caption{Laplacian Eigenmaps}
\end{subfigure}
\caption{Original and robustified embeddings for the PBMC dataset.}
\label{fig:pbmc_embeddings}
\end{figure}

\subsection{The mouse brain from Tabula muris}

The second dataset is another single cell RNA-sequencing dataset,
this time coming from the Tabula muris project \cite{TabulaMuris2020},
sequenced using the Smart-seq2 protocol, which we downsample to the
7,249 non-myeloid cells in the brain tissue sample.  According to the
provided cell type annotation, this sample contains a mixture of
neurons as well as glial cells.  We perform the same experimental
protocol as above on the PBMC data set, and find similar conclusions,
except that in this case Isomap gives a significantly more diffuse
cluster in Procrustes space than PCA or Laplacian eigenmaps, and seems
to give as poor a performance as t-SNE and UMAP.  See
figures~\ref{fig:tm_procrustes_hist} and~\ref{fig:tm_embeddings}.

\ifmam
\begin{figure}
        \centering
        \begin{subfigure}[t]{0.18\textwidth}
            \centering
            \includegraphics[height=0.75in]{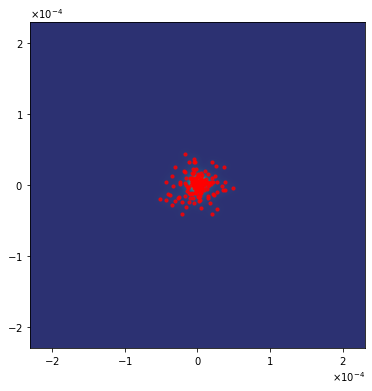}
            \caption{PCA}
        \end{subfigure}%
        \begin{subfigure}[t]{0.18\textwidth}
            \centering
            \includegraphics[height=0.75in]{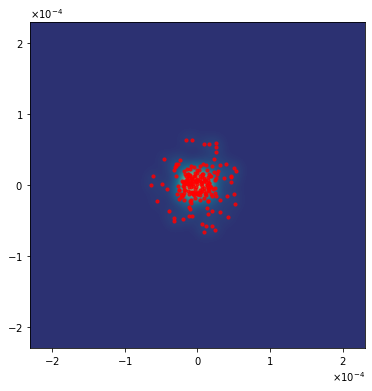}
            \caption{Isomap}
        \end{subfigure}
                \begin{subfigure}[t]{0.18\textwidth}
            \centering
            \includegraphics[height=0.75in]{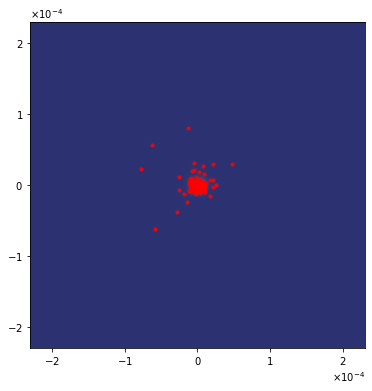}
            \caption{Laplacian eigenmaps.}
        \end{subfigure}%
        \begin{subfigure}[t]{0.18\textwidth}
            \centering
            \includegraphics[height=0.75in]{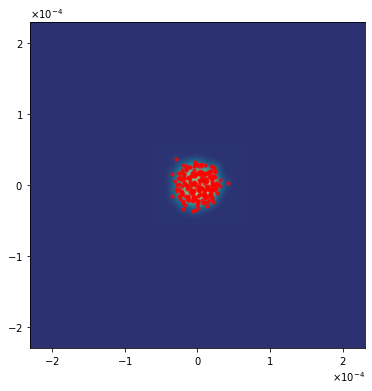}
            \caption{t-SNE.}
        \end{subfigure}%
        \begin{subfigure}[t]{0.18\textwidth}
            \centering
            \includegraphics[height=0.75in]{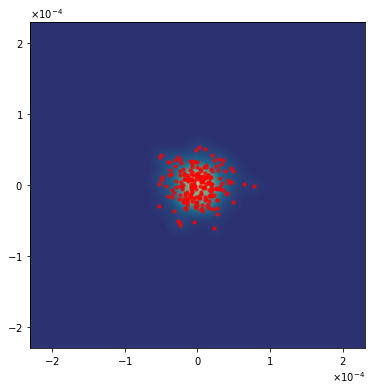}
            \caption{UMAP.}
        \end{subfigure}%
        \\
\setcounter{subfigure}{0}
                \begin{subfigure}[t]{0.25\textwidth}
            \centering
            \includegraphics[height=0.75in]{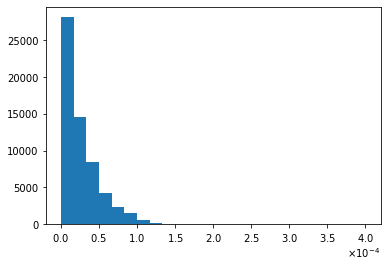}
            \caption{PCA}
        \end{subfigure}%
        \begin{subfigure}[t]{0.25\textwidth}
            \centering
            \includegraphics[height=0.75in]{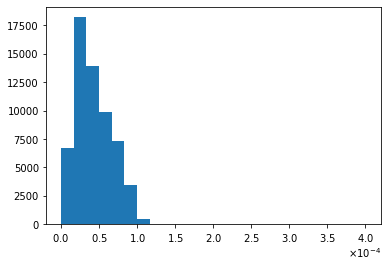}
            \caption{Isomap}
        \end{subfigure}%
        \begin{subfigure}[t]{0.25\textwidth}
            \centering
            \includegraphics[height=0.75in]{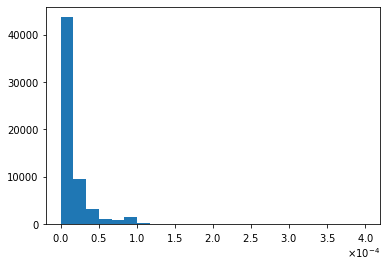}
            \caption{Laplacian eigenmaps.}
        \end{subfigure}%
\\
        \begin{subfigure}[t]{0.25\textwidth}
            \centering
            \includegraphics[height=0.75in]{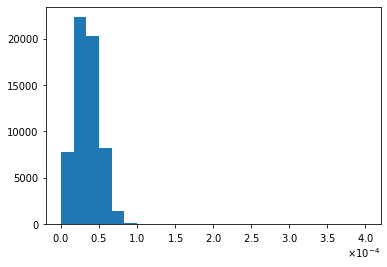}
            \caption{t-SNE.}
        \end{subfigure}%
        \begin{subfigure}[t]{0.18\textwidth}
            \centering
            \includegraphics[height=0.75in]{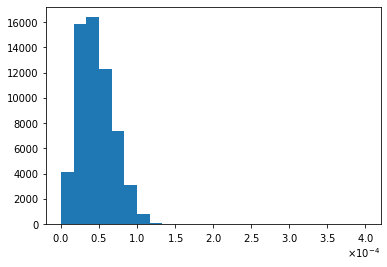}
            \caption{UMAP.}
        \end{subfigure}%

        \caption{Procrustes distances between subsample embeddings for the TM dataset.}
        \label{fig:tm_procrustes_hist}
    \end{figure}
\else
\begin{figure}
\centering
\begin{subfigure}[t]{0.45\textwidth}
\centering
	\includegraphics[height=0.75in]{procrustes_figures/tm_pca_procrustes.png}
            \includegraphics[height=0.75in]{procrustes_figures/tm_pca_procrustes_hist.png}
            \caption{PCA}
        \end{subfigure}%
        \begin{subfigure}[t]{0.45\textwidth}
            \centering
            \includegraphics[height=0.75in]{procrustes_figures/tm_isomap_procrustes.png}
            \includegraphics[height=0.75in]{procrustes_figures/tm_isomap_procrustes_hist.png}
            \caption{Isomap}
        \end{subfigure}
\\[1ex]
                \begin{subfigure}[t]{0.45\textwidth}
            \centering
            \includegraphics[height=0.75in]{procrustes_figures/tm_lem_procrustes.png}
            \includegraphics[height=0.75in]{procrustes_figures/tm_lem_procrustes_hist.png}
            \caption{Laplacian eigenmaps.}
        \end{subfigure}%
        \begin{subfigure}[t]{0.45\textwidth}
            \centering
            \includegraphics[height=0.75in]{procrustes_figures/tm_tsne_procrustes.png}
            \includegraphics[height=0.75in]{procrustes_figures/tm_tsne_procrustes_hist.png}
            \caption{t-SNE.}
        \end{subfigure}%
\\[1ex]
        \begin{subfigure}[t]{0.45\textwidth}
            \centering
            \includegraphics[height=0.75in]{procrustes_figures/tm_umap_procrustes.png}
            \includegraphics[height=0.75in]{procrustes_figures/tm_umap_procrustes_hist.png}
            \caption{UMAP.}
        \end{subfigure}%

\caption{Procrustes distances between subsample embeddings for the TM dataset.}
\label{fig:tm_procrustes_hist}
\end{figure}
\fi

\begin{figure}
\begin{subfigure}[t]{0.25\textwidth}
\centering
\includegraphics[height=1.5in]{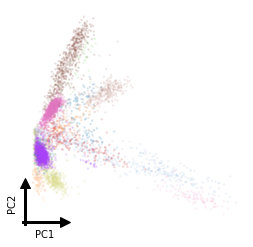}
\end{subfigure}%
\begin{subfigure}[t]{0.25\textwidth}
\centering
\includegraphics[height=1.5in]{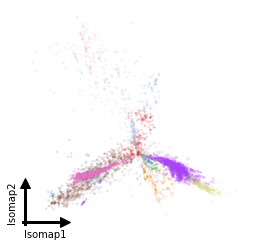}
\end{subfigure}
\begin{subfigure}[t]{0.25\textwidth}
\centering
\includegraphics[height=1.5in]{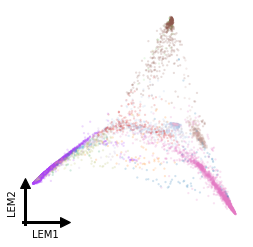}
\end{subfigure}
\\
\begin{subfigure}[t]{0.25\textwidth}
\centering
\includegraphics[height=1.5in]{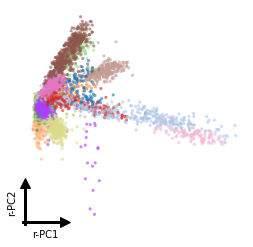}
\caption{PCA}
\end{subfigure}%
\begin{subfigure}[t]{0.25\textwidth}
\centering
\includegraphics[height=1.5in]{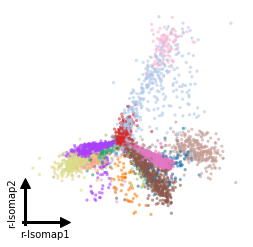}
\caption{Isomap}
\end{subfigure}
\begin{subfigure}[t]{0.25\textwidth}
\centering
\includegraphics[height=1.5in]{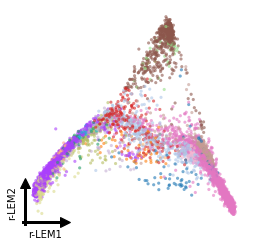}
\caption{Laplacian Eigenmaps}
\end{subfigure}
\caption{Original and robustified embeddings for the TM dataset.}
\label{fig:tm_embeddings}
\end{figure}

\printbibliography

\end{document}

